\documentclass{article} % For LaTeX2e
\usepackage{iclr2022_conference,times}

\usepackage[utf8]{inputenc} % allow utf-8 input
\usepackage[T1]{fontenc}    % use 8-bit T1 fonts
\usepackage{url}            % simple URL typesetting
\usepackage{booktabs}       % professional-quality tables
\usepackage{amsfonts}       % blackboard math symbols
\usepackage{nicefrac}       % compact symbols for 1/2, etc.
\usepackage{microtype}      % microtypography
\usepackage{graphicx}

\usepackage{amsmath}
\usepackage{amssymb}
\usepackage{multirow}
\usepackage{bbm}
\usepackage{enumitem}
\usepackage{wrapfig}
\usepackage{epsfig,psfrag}
\usepackage{caption}
\usepackage{subcaption}
\usepackage[normalem]{ulem}
\usepackage{bm}

\usepackage{amsthm}

\newtheorem{lemma}{Lemma}
\newtheorem{theorem}{Theorem}

\usepackage{hyperref}       % hyperlinks
\definecolor{darkblue}{rgb}{0, 0, 0.5}
\hypersetup{
    colorlinks=true,
    citecolor=darkblue,
    filecolor=darkblue,
    linkcolor=darkblue,
    urlcolor=darkblue,
    }

\title{Fast Topological Clustering \\with Wasserstein Distance}

\author{Tananun Songdechakraiwut\thanks{Code for topological clustering is available at \url{https://github.com/topolearn}.} \\
Department of Electrical and Computer Engineering \\
University of Wisconsin--Madison, USA \\
\texttt{songdechakra@wisc.edu} \\
\And
Bryan M. Krause \& Matthew I. Banks \\
Department of Anesthesiology \\
Department of Neuroscience \\
University of Wisconsin--Madison, USA \\
\And
Kirill V. Nourski \\
Department of Neurosurgery \\
Iowa Neuroscience Institute \\
University of Iowa, USA \\
\And
Barry D. Van Veen \\
Department of Electrical and Computer Engineering \\
University of Wisconsin--Madison, USA
}

\iclrfinalcopy % Uncomment for camera-ready version, but NOT for submission.
\begin{document}

\maketitle

\begin{abstract}
The topological patterns exhibited by many real-world networks motivate the development of topology-based methods for assessing the similarity of networks.
However, extracting topological structure is difficult, especially for large and dense networks whose node degrees range over multiple orders of magnitude.
In this paper, we propose a novel and computationally practical {\em topological clustering} method that clusters complex networks with intricate topology using principled theory from persistent homology and optimal transport. 
Such networks are aggregated into clusters through a centroid-based clustering strategy based on both their topological and geometric structure, preserving correspondence between nodes in different networks.
The notions of topological proximity and centroid are characterized using a novel and efficient approach to computation of the Wasserstein distance and barycenter for persistence barcodes associated with connected components and cycles.
The proposed method is demonstrated to be effective using both simulated networks and measured functional brain networks.
\end{abstract}

\section{Introduction}
Network models are extremely useful representations for complex data. Significant attention has been given to cluster analysis within a single network, such as detecting community structure \citep{newman2006modularity,rohe2011spectral,yin2017local}. 
Less attention has been given to clustering of collections of network representations.  Clustering approaches typically group similar networks based on comparisons of edge weights \citep{xu2005survey}, not topology. Assessing similarity of networks based on topological structure offers the potential for new insight, given the inherent topological patterns exhibited by most real-world networks. However, extracting meaningful network topology is a very difficult task, especially for large and dense networks whose node degrees range over multiple orders of magnitude \citep{barrat2004architecture,bullmore2009complex,honey2007network}.

Persistent homology \citep{barannikov1994framed,edelsbrunner2000topological} has recently emerged as a powerful tool for understanding, characterizing and quantifying complex networks \citep{songdechakraiwut2021topological}. 
Persistent homology represents a network using topological features such as connected components and cycles. Many networks naturally divide into modules or connected components \citep{bullmore2009complex,honey2007network}. Similarly, cycles are ubiquitous and are  often used to describe information propagation, robustness and feedback mechanisms \citep{kwon2007analysis,lind2005cycles}. 
Effective use of such topological descriptors requires a notion of proximity that quantifies the similarity between persistence barcodes, a convenient representation for connected components and cycles \citep{ghrist2008barcodes}. Wasserstein distance, which measures the minimal effort to modify one persistence barcode to another \citep{rabin2011wasserstein}, is an excellent choice due to its appealing geometric properties \citep{staerman2021ot} and its effectiveness shown in many machine learning applications \citep{kolouri2017optimal,mi2018variational,solomon2015convolutional}. Importantly, Wasserstein distance can be used to interpolate networks while preserving topological structure \citep{songdechakraiwut2021topological}, and the mean under the Wasserstein distance, known as Wasserstein barycenter \citep{agueh2011barycenters}, can be viewed as the topological centroid of a set of networks.

The high cost of computing persistence barcodes, Wasserstein distance and the Wasserstein barycenter limit their applications to small scale problems, see, e.g., \citep{clough2020topological,hu2019topology,kolouri2017optimal,mi2018variational}. Although approximation algorithms have been developed \citep{cuturi2013sinkhorn,cuturi2014fast,Lacombe2018LargeSC,li2020continuous,solomon2015convolutional,vidal2019progressive,xie2020fast,ye2017fast}, it is unclear whether these approximations are effective for clustering complex networks as they inevitably limit sensitivity to subtle topological features. Indeed, more and more studies, see, e.g., \citep{robins2016principal,xia2014persistent} have demonstrated that such subtle topological patterns are important for the characterization of complex networks, suggesting these approximation algorithms are undesirable.

Recently, it was shown that the Wasserstein distance and barycenter for network graphs have closed-form solutions that can be computed exactly and efficiently \citep{songdechakraiwut2021topological} because the persistence barcodes are inherently one dimensional.
Motivated by this result, we present a novel and computationally practical {\em topological clustering} method that clusters complex networks of the same size with intricate topological characteristics. 
Topological information alone is effective at clustering networks when there is no correspondence between nodes in different networks. However, when networks have meaningful node correspondence, we perform cluster analysis using combined topological and geometric information to preserve node correspondence.
Statistical validation based on ground truth information is used to demonstrate the effectiveness of our method when discriminating subtle topological features in simulated networks.
The method is further illustrated by clustering measured functional brain networks associated with different levels of arousal during administration of general anesthesia. Our proposed method outperforms other clustering approaches in both the simulated and measured data.

The paper is organized as follows. Background on our one-dimensional representation of persistence barcodes is given in section \ref{sec:prelim}, while section \ref{sec:method} presents our topological clustering method. In sections \ref{sec:validation} and \ref{sec:application}, we compare the performance of our method to several baseline algorithms using simulated and measured networks, and conclude the paper with a brief discussion of the potential impact of this work.

\section{One-Dimensional Persistence Barcodes}
\label{sec:prelim}

\subsection{Graph filtration}

Consider a network represented as a weighted graph $G=(V,\bm{w})$ comprising a set of nodes $V$ with symmetric adjacency matrix $\bm{w}=(w_{ij})$,
with edge weight $w_{ij}$ representing the relationship between node $i$ and node $j$.
The number of nodes is denoted as $|V|$.
The binary graph $G_\epsilon=(V,\bm{w_\epsilon})$ of $G$ is defined as a graph consisting of the node set $V$ and binary edge weights $w_{\epsilon,ij} =1$ if $w_{ij} > \epsilon$ and $w_{\epsilon,ij} = 0$ otherwise.
We view the binary network $G_{\epsilon}$ as a 1-skeleton \citep{munkres2018elements}, a simplicial complex comprising only nodes and edges. In the 1-skeleton, there are two types of topological features: connected components (0-dimensional topological features) and cycles (1-dimensional topological features). There are no topological features of higher dimensions in the 1-skeleton, in contrast to well-known Rips \citep{ghrist2008barcodes} and clique complexes \citep{otter2017roadmap}.
The number of connected components and the number of
cycles in the binary network are referred to as the 0-th Betti number $\beta_0(G_{\epsilon})$ and the 1-st Betti number $\beta_1(G_{\epsilon})$, respectively. A {\em graph filtration} of $G$ is defined as a collection of nested binary networks \citep{lee2012persistent}:
\[G_{\epsilon_0} \supseteq G_{\epsilon_1} \supseteq \cdots \supseteq G_{\epsilon_k} ,\]
where $\epsilon_0 \leq \epsilon_1 \leq \cdots \leq \epsilon_k$ are filtration values.
As $\epsilon$ increases, more and more edges are removed from the network $G$ since we threshold the edge weights at higher connectivity.
For instance, $G_{-\infty}$ has each pair of nodes connected by an edge and thus is a complete graph consisting of a single connected component, while $G_{\infty}$ has no edges and represents the node set. Figure \ref{fig:schematic} illustrates the graph filtration of a four-node network and the corresponding Betti numbers. Note that other filtrations for analyzing graphs have been proposed, including use of descriptor functions such as heat kernels \citep{carriere2020perslay} and task-specific learning \citep{hofer2020graph}.

\begin{figure}
\centering
\includegraphics[width=1\linewidth]{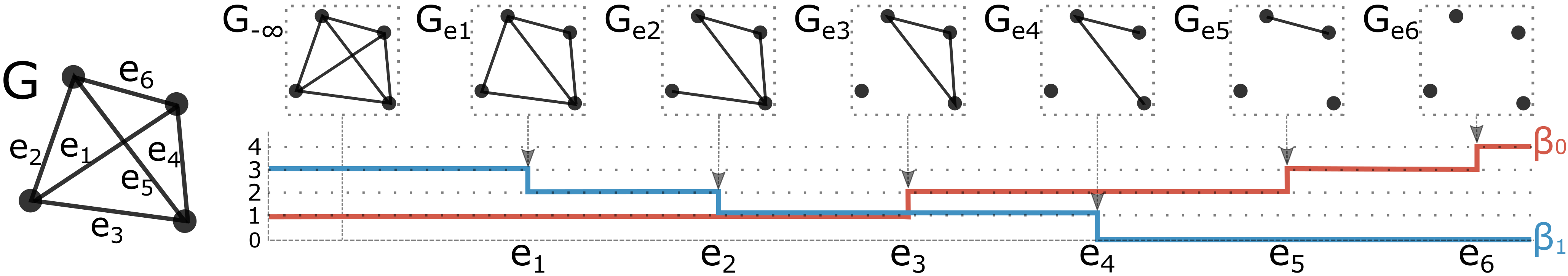}
\caption{Graph filtration of four-node network $G$. As the filtration value increases, the number of connected components $\beta_0$ monotonically increases while the number of cycles $\beta_1$ monotonically decreases. Connected components are born at the edge weights $e_3,e_5,e_6$ while cycles die at $e_1,e_2,e_4$.}
\label{fig:schematic}
\end{figure}

\subsection{Birth-death decomposition}
\label{subsec:bd-decomposition}

Persistent homology keeps track of birth and death of connected components and cycles over filtration values $\epsilon$ to deduce their {\em persistence}, i.e., the lifetime from their birth to death over $\epsilon$.
The persistence is represented as a {\em persistence barcode} $PB(G)$ comprising intervals $[b_i,d_i]$ representing the lifetime of a connected component or a cycle that appears at the filtration value $b_i$ and vanishes at $d_i$.

In the edge-weight threshold graph filtration defined in Section 2.1, connected components are born and cycles die as the filtration value increases \citep{songdechakraiwut2021topological}.
Specifically, $\beta_0$ is monotonically increasing from $\beta_0(G_{-\infty}) = 1$ to  $\beta_0(G_{\infty}) = |V|$.
There are $\beta_0(G_{\infty}) - \beta_0(G_{-\infty}) = |V| - 1$ connected components that are born over the filtration. Connected components will never die once they are born, implying that every connected component has a death value at $\infty$. Thus, we can represent their persistence as a collection of finite birth values $B(G) = \{b_i \}_{i=1}^{|V|-1}$.
On the other hand, $G_{-\infty}$ is a complete graph containing all possible cycles; thus, all cycles have birth values at $-\infty$. Again, we can represent the persistence of the cycles as a collection of finite death values $D(G)=\{ d_i \}$.
How many cycles are there? Since the deletion of an edge $w_{ij}$ must result in either the birth of a connected component or the death of a cycle, every edge weight must be in either $B(G)$ or $D(G)$. Thus, the edge weight set $W=\{w_{ij} | i>j\}$ decomposes into the collection of birth values $B(G)$ and the collection of death values $D(G)$. Since $G_{-\infty}$ is a complete graph with $\frac{|V|(|V|-1)}{2}$ edge weights and $|V|-1$ of these weights are associated with the birth of connected components, the number of cycles in $G_{-\infty}$ is thus equal to $\frac{|V|(|V|-1)}{2} - (|V|-1) = 1 + \frac{|V| (|V| - 3)}{2}$.
In the example of Figure \ref{fig:schematic}, we have $B(G)=\{e_3,e_5,e_6\}$ and $D(G)=\{e_1,e_2,e_4\}$.
Other graph filtrations \citep{carriere2020perslay,hofer2020graph} do not necessarily share this monotonicity property and consequently one-dimensional barcode representations are not applicable.

Finding the birth values in $B(G)$ is {\em equivalent} to finding edge weights comprising the {\em maximum spanning tree} of $G$ and can be done using well-known methods such as Prim's and Kruskal's algorithms \citep{lee2012persistent}. Once $B(G)$ is known, $D(G)$ is simply given as the remaining edge weights. Finding $B(G)$ and $D(G)$ requires only $O(n\log n)$ operations, where $n$ is the number of edges in the network, and thus is extremely computationally efficient.

\section{Clustering Method}
\label{sec:method}

\subsection{Topological distance simplification}

Use of edge-weight threshold filtration and limiting consideration to connected components and cycles as topological features results in significant simplification of the 2-Wasserstein distance \citep{rabin2011wasserstein} between barcode descriptors \citep{cohen2010lipschitz} of networks as follows. 
Let $G$ and $H$ be two given networks that have the same number of nodes.
The {\em topological distance} $d_{top}(G,H)$ is defined as the optimal matching cost:
\begin{equation}
 \Big( \min_{\tau} \sum_{p \in PB(G)} || p - \tau(p) ||^2 \Big)^{\frac{1}{2}} = \Big( \min_{\tau} \sum_{p = [b_p,d_p] \in PB(G)} \big[ b_p - b_{\tau(p)}\big]^2 + \big[ d_p - d_{\tau(p)}\big]^2 \Big)^{\frac{1}{2}} ,
\end{equation}
where the optimization is over all possible bijections $\tau$ from barcode $PB(G)$ to barcode $PB(H)$.
Intuitively, we can think of each interval $[b_i,d_i]$ as a point $(b_i,d_i)$ in a 2-dimensional plane. The topological distance measures the minimal amount of work to move points in $PB(G)$ to $PB(H)$. Note this alternative representation of points in the plane is equivalent to the persistence barcode and called the {\em persistence diagram} \citep{edelsbrunner2008persistent}. Moving a connected component point $(b_i,\infty)$ to a cycle point $(-\infty,d_j)$ or vice versa takes an infinitely large amount of work. Thus, we only need to optimize over bijections that match the same type of topological features. Subsequently, we can equivalently rewrite $d_{top}$ in terms of $B(G),D(G),B(H)$ and $D(H)$ as
\begin{equation}
 d_{top}(G,H) = \Big( \min_{\tau_0} \sum_{b \in B(G)} \big[ b - \tau_0(b)\big]^2 + \min_{\tau_1} \sum_{d \in D(G)} \big[ d - \tau_1(d)\big]^2 \Big)^{\frac{1}{2}} ,
\end{equation}
where $\tau_0$ is a bijection from $B(G)$ to $B(H)$ and $\tau_1$ is a bijection from $D(G)$ to $D(H)$.
The first term matches connected components to connected components and the second term matches cycles to cycles.
Matching each type of topological feature separately is commonly done in medical imaging and machine learning studies \citep{clough2020topological,hu2019topology}.
The topological distance $d_{top}$ has a closed-form solution that allows for efficient computation as follows \citep{songdechakraiwut2021topological}.

\begin{equation}
\label{prop:1dwass}
d_{top}(G,H) = \Big( \sum_{b \in B(G)} \big[ b - \tau_0^*(b)\big]^2 + \sum_{d \in D(G)} \big[ d - \tau_1^*(d)\big]^2 \Big)^\frac{1}{2},
\end{equation}

where $\tau_0^*$ maps the $l$-th smallest birth value in $B(G)$ to the $l$-th smallest birth value in $B(H)$ and $\tau_1^*$ maps the $l$-th smallest death value in $D(G)$ to the $l$-th smallest death value in $D(H)$ for all $l$.

A proof is provided in Appendix \ref{supp:proof}. As a result, the optimal matching cost can be computed quickly and efficiently by sorting birth and death values, and matching them in order. The computational cost of evaluating $d_{top}$ is $O(n \log n)$, where $n$ is the number of edges in networks.

\subsection{Topological clustering}

Let $G=(V,\bm{w})$ and $H=(V,\bm{u})$ be two networks. We define the network dissimilarity $d^2_{net}$ between $G$ and $H$ as a weighted sum of the squared geometric distance and the squared topological distance:
\begin{equation}
\label{eq:dnet}
d^2_{net}\big(G,H\big) = (1-\lambda) \sum_{i}\sum_{j>i} \big(w_{ij} - u_{ij} \big)^2 + \lambda d^2_{top}\big(G,H \big),
\end{equation}
where $\lambda \in [0,1]$ controls the relative weight between the geometric and topological terms. The geometric distance measures the node-by-node dissimilarity in the networks that is not captured by topology alone and is helpful when node identity is meaningful, such as in neuroscience applications.
Given observed networks with identical node sets, the goal is to partition the networks into $k$ clusters
$\mathcal{C} = \{C_h\}_{h=1}^k$ with corresponding cluster centroids or representatives $\mathcal{M} = \{M_h\}_{h=1}^k$ such that the sum of the network dissimilarities $d^2_{net}$ from the networks to their representatives is minimized, i.e.,
\begin{equation}
\min_{\mathcal{C},\mathcal{M}} L(\mathcal{C},\mathcal{M}) = \min_{\mathcal{C},\mathcal{M}} \sum_{h=1}^k \sum_{G \in C_h} d^2_{net}(M_h,G) .
\label{obj:clustering}
\end{equation} 
The {\em topological clustering} formulation given in (\ref{obj:clustering}) suggests a natural iterative relocation algorithm using coordinate descent \citep{banerjee2005clustering}.
In particular, the algorithm alternates between two steps: an assignment step and a re-estimation step. In the assignment step, $L$ is minimized with respect to $\mathcal{C}$ while holding $\mathcal{M}$ fixed. Minimization of $L$ is achieved simply by assigning each observed network to the cluster whose representative $M_h$ is the nearest in terms of the criterion $d^2_{net}$. In the re-estimation step, the algorithm minimizes $L$ with respect to $\mathcal{M}$ while holding $\mathcal{C}$ fixed. In this case, $L$ is minimized by re-estimating the representatives for each individual cluster:
\begin{equation}
\min_{\mathcal{M}} \sum_{h=1}^k \sum_{G \in C_h} d^2_{net}(M_h,G) = \sum_{h=1}^k \min_{M_h} \sum_{G \in C_h} d^2_{net}(M_h,G) .
\label{obj:step2}
\end{equation} 
We will consider solving the objective function given in (\ref{obj:step2}) for $\lambda = 0, \lambda = 1$ and $\lambda \in (0,1)$.

$\lambda=0$ describes conventional edge clustering \citep{macqueen1967some} since the topological term is excluded.

$\lambda = 1$ describes clustering based on pure topology and ignores correspondence of edge weights. This case is potentially useful for clustering networks whose node sets are not identical or of different size. Each representative $M_h$ of cluster $C_h$ minimizes the sum of the squared topological distances, i.e.,
\begin{equation}
\min_{M_h} \sum_{G \in C_h} d_{top}^2(M_h,G) = \min_{B(M_h),D(M_h)} \sum_{G \in C_h} \Big( \sum_{b \in B(M_h)} \big[ b - \tau_{0}^*(b)\big]^2 + \sum_{d \in D(M_h)} \big[ d - \tau_{1}^*(d)\big]^2 \Big) .
\label{eq:topmean}
\end{equation}
Thus, we only need to optimize over the topology of the network, i.e., $B(M_h)$ and $D(M_h)$, instead of the original network $M_h$ itself. 
The topology solving (\ref{eq:topmean}) is the {\em topological centroid} of networks in cluster $C_h$.
Interestingly, the topological centroid has a closed-form solution and can be calculated analytically as follows.

\begin{lemma}
Let $G_1,...,G_n$ be $n$ networks each with $m$ nodes. 
Let $B(G_i): b_{i,1} \leq \cdots \leq b_{i,m-1}$ and $D(G_i): d_{i,1} \leq \cdots \leq d_{i,1+m(m-3)/2}$ be the topology of $G_i$.
It follows that the $l$-th smallest birth value of the topological centroid of the $n$ networks is given by the mean of all the $l$-th smallest birth values of such networks, i.e., 
$ \sum_{i=1}^n b_{i,l} / n $.
Similarly, the $l$-th smallest death value of the topological centroid is given by $\sum_{i=1}^n d_{i,l} / n$.
\label{lemma:topmean}
\end{lemma}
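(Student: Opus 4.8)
The plan is to exploit the closed-form expression for the topological distance in (\ref{prop:1dwass}) to turn the barycenter problem (\ref{eq:topmean}) into a simple separable least-squares problem. First I would write the topology of the sought centroid $M$ as its sorted birth values $\beta_1 \le \cdots \le \beta_{m-1}$ together with its sorted death values $\delta_1 \le \cdots \le \delta_{1+m(m-3)/2}$. Since (\ref{prop:1dwass}) says that for \emph{any} two networks the optimal matching pairs the $l$-th smallest birth (resp.\ death) value of one with the $l$-th smallest birth (resp.\ death) value of the other, this sorted-to-sorted matching is already optimal against each $G_i$ irrespective of the particular values $\beta_l,\delta_l$. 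Hence I can substitute it directly and obtain
\begin{equation}
\sum_{i=1}^n d_{top}^2(M,G_i) = \sum_{l=1}^{m-1} \sum_{i=1}^n \big(\beta_l - b_{i,l}\big)^2 + \sum_{l=1}^{1+m(m-3)/2} \sum_{i=1}^n \big(\delta_l - d_{i,l}\big)^2 .
\end{equation}
Each summand depends on only a single coordinate $\beta_l$ or $\delta_l$, so the objective decouples completely across the individual birth and death values.

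Next I would minimize each coordinate independently. For a fixed $l$ the inner sum $\sum_{i=1}^n (\beta_l - b_{i,l})^2$ is a one-dimensional quadratic in $\beta_l$; setting its derivative to zero, or invoking the standard fact that a sum of squared deviations is minimized at the sample mean, yields $\beta_l = \frac{1}{n}\sum_{i=1}^n b_{i,l}$, and likewise $\delta_l = \frac{1}{n}\sum_{i=1}^n d_{i,l}$. This immediately gives the claimed formulas, \emph{provided} the resulting values actually constitute an admissible topology.

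The main obstacle, and the only place where the argument is not entirely routine, is verifying this feasibility: substituting the sorted-to-sorted matching into (\ref{prop:1dwass}) is valid only when $\beta_1 \le \cdots \le \beta_{m-1}$ and $\delta_1 \le \cdots \le \delta_{1+m(m-3)/2}$, so I must confirm that the \emph{unconstrained} coordinatewise minimizer respects this ordering rather than having to solve a constrained (isotonic) problem. This follows because each sequence $b_{i,1} \le \cdots \le b_{i,m-1}$ is already sorted by hypothesis, and averaging preserves order: $\beta_l = \frac{1}{n}\sum_i b_{i,l} \le \frac{1}{n}\sum_i b_{i,l+1} = \beta_{l+1}$, with the same inequality for the $\delta_l$. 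Thus the unconstrained minimizer is automatically feasible, which simultaneously certifies that the matching used in the objective was indeed optimal and that the coordinatewise means attain the global minimum over all admissible centroids. The death-value term is handled identically, completing the proof.
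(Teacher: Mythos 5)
Your proposal is correct and follows essentially the same route as the paper's proof: invoke the closed-form sorted-to-sorted matching from (\ref{prop:1dwass}) to decouple the barycenter objective into independent one-dimensional quadratics in each $\beta_l$ and $\delta_l$, then minimize each at the sample mean. Your explicit verification that the coordinatewise means remain sorted (so the unconstrained minimizer is feasible and the assumed matching is genuinely optimal) is a point the paper leaves implicit, and it is handled correctly since averaging preserves the ordering $b_{i,l} \le b_{i,l+1}$.
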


Since Eq. (\ref{eq:topmean}) is quadratic, Lemma \ref{lemma:topmean} can be proved by setting its derivative equal to zero. The complete proof is given in Appendix \ref{supp:proof}. The results in \citep{songdechakraiwut2020topological,songdechakraiwut2022topological} may be used to find the centroid of different size networks.

For the most general case considering both correspondence of edge weights and topology, i.e., when $\lambda \in (0,1)$, we can optimize the objective function given in (\ref{obj:step2}) by gradient descent.
Let $H=(V,\bm{u})$ be a cluster representative being estimated given $C_h$.
The gradient of the squared topological distance $\nabla_H d_{top}^2(H,G)$ with respect to edge weights $\bm{u} = (u_{ij})$  is given as a gradient matrix whose $ij$-th entry is 
\begin{equation}
\frac{\partial d^2_{top}(H,G)}{\partial u_{ij}} 
= 
\begin{cases}
2 \big[ u_{ij} - \tau_0^*( u_{ij}  ) \big] & \text{if } u_{ij} \in B(H); \\
2 \big[ u_{ij}  - \tau_1^*(u_{ij} ) \big] & \text{if } u_{ij} \in D(H).
\end{cases}
\end{equation}
This follows because the edge weight set decomposes into the collection of births and the collection of deaths. Intuitively, by slightly adjusting the edge weight $u_{ij}$, we have the slight adjustment of either a birth value in $B(H)$ or a death value in $D(H)$, which slightly changes the topology of the network $H$. 
The gradient computation consists of computing persistence barcodes and finding the optimal matching using the closed-form solution given in (\ref{prop:1dwass}), requiring $O(n\log n)$ operations, where $n$ is the number of edges in networks. 

Evaluating the gradient for (\ref{obj:step2}) requires computing the gradients of all the observed networks. This can be computationally demanding when the size of a dataset is large. However, an equivalent minimization problem that allows faster computation is possible using the following result:
\begin{lemma}
\begin{equation}
\sum_{h=1}^k \min_{H_h} \sum_{G \in C_h} d^2_{net}(H_h,G) = \sum_{h=1}^k \min_{H_h} \Big( (1-\lambda)\sum_{i}\sum_{j>i} \big(u_{h,ij} - \overline w_{h,ij} \big)^2 + \lambda d_{top}^2\big(H_h, \widehat M_{top,h}\big) \Big), 
\end{equation}

where $\bm{\overline w_h} = (\overline w_{h,ij})$ are edge weights in the sample mean network $\overline M_{h} = (V,\bm{\overline w_h})$ of cluster $C_h$, and $\widehat M_{top,h}$ is the topological centroid of networks in cluster $C_h$. 

\label{lemma:interpolation}
\end{lemma}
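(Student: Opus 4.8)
The plan is to prove the lemma one cluster at a time, by establishing a pointwise identity: for each fixed candidate representative $H_h=(V,\bm{u_h})$, the objective $\sum_{G\in C_h} d^2_{net}(H_h,G)$ agrees with the bracketed expression on the right up to a positive multiplicative factor $|C_h|$ and an additive constant that does not involve $H_h$. Once this identity is in hand, minimizing over $H_h$ and summing over $h$ gives the claimed equivalence, since a positive rescaling and an additive shift leave the minimizing representative unchanged (so the ``$=$'' in the statement is read as equivalence of the two minimization problems, i.e. they produce the same cluster representative). The two ingredients are a parallel-axis (bias--variance) decomposition applied separately to the geometric and topological terms, combined with the closed form (\ref{prop:1dwass}) and Lemma \ref{lemma:topmean}.

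First I would treat the geometric term. Expanding $d^2_{net}$ from (\ref{eq:dnet}) and collecting the geometric contributions edge by edge, for each pair $i>j$ I would apply the identity $\sum_{G\in C_h}(u_{h,ij}-w_{ij})^2 = |C_h|\,(u_{h,ij}-\overline w_{h,ij})^2 + \sum_{G\in C_h}(w_{ij}-\overline w_{h,ij})^2$, where $\overline w_{h,ij}$ is the sample mean edge weight of cluster $C_h$ (the cross term vanishes because deviations from the mean sum to zero). The first summand is exactly $|C_h|$ times the geometric term on the right, while the second is independent of $\bm{u_h}$ and is therefore absorbed into the additive constant.

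The topological term is the crux. Using the closed form (\ref{prop:1dwass}), for each $G\in C_h$ the distance $d^2_{top}(H_h,G)$ is $\sum_l(\beta_l-b_{G,l})^2$ over the order-matched births plus the analogous death sum, where $\beta_1\le\cdots\le\beta_{m-1}$ are the sorted births of $H_h$ and $b_{G,1}\le\cdots\le b_{G,m-1}$ those of $G$. Summing over $G$ and applying the same parallel-axis decomposition coordinatewise in $l$ produces $|C_h|(\beta_l-\overline b_l)^2$ plus an $H_h$-independent remainder, where $\overline b_l=\frac{1}{|C_h|}\sum_{G\in C_h}b_{G,l}$. The key observation is that, since each network's births are already sorted, averaging preserves monotonicity, so $\overline b_1\le\cdots\le\overline b_{m-1}$; by Lemma \ref{lemma:topmean} these averages are precisely the sorted births of the topological centroid $\widehat M_{top,h}$. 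Reapplying (\ref{prop:1dwass}) in the reverse direction then identifies $\sum_l(\beta_l-\overline b_l)^2$ with the birth part of $d^2_{top}(H_h,\widehat M_{top,h})$, and the identical argument for the death values completes the topological term.

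Combining the two decompositions yields $\sum_{G\in C_h}d^2_{net}(H_h,G)=|C_h|\big[(1-\lambda)\sum_{i}\sum_{j>i}(u_{h,ij}-\overline w_{h,ij})^2+\lambda\,d^2_{top}(H_h,\widehat M_{top,h})\big]+c_h$ with $c_h$ independent of $H_h$, which gives the stated equivalence after taking the minimum over each $H_h$ and summing over $h$. I expect the main obstacle to be the bookkeeping in the topological term: one must check that the order-preserving matchings $\tau_0^*,\tau_1^*$ are mutually consistent across all three objects ($H_h$, each $G$, and $\widehat M_{top,h}$), so that the coordinatewise decomposition in $l$ is legitimate. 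This consistency rests entirely on monotonicity being inherited by the coordinatewise averages, which is exactly what Lemma \ref{lemma:topmean} guarantees.
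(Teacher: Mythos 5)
Your proposal is correct and follows essentially the same route as the paper's proof: a complete-the-square (parallel-axis) decomposition of the geometric term around the sample mean edge weights, and the same decomposition of the topological term coordinatewise over sorted births and deaths, identified with $d_{top}^2(H_h,\widehat M_{top,h})$ via Lemma \ref{lemma:topmean} and the closed form in (\ref{prop:1dwass}). If anything, you are more careful than the paper, which leaves implicit both the monotonicity of the coordinatewise averages (needed so that matching them in sorted order to the births and deaths of $H_h$ is again the optimal bijection) and the fact that the final ``equality'' holds only up to the factor $|C_h|$ and an additive constant, i.e.\ as an equivalence of argmins.
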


Thus, instead of computing the gradient for every network in the set, it is sufficient to compute the gradient at the cluster sample means $\overline M_{h}$ and topological centroids $\widehat M_{top,h}$.
Hence, one only needs to perform {\em topological interpolation} between the sample mean network $\overline M_{h}$ and the topological centroid $\widehat M_{top,h}$ of each cluster. That is, the optimal representative is the one whose geometric location is close to $\overline M_{h}$ and topology is similar to $\widehat M_{top,h}$. 
At each current iteration, we take a step in the direction of negative gradient with respect to an updated $H_h$ from the previous iteration.
Note that the proposed method constrains the search of cluster centroids to a space of meaningful networks through topological interpolation. In contrast, the sample mean, such as would be used in a naive application of $k$-means, does not necessarily represent a meaningful network.

Furthermore, we have the following theorem:

\begin{theorem}
The topological clustering algorithm monotonically decreases $L$ in (\ref{obj:clustering}) and terminates in a finite number of steps at a locally optimal partition.
\label{lemma:localminimum}
\end{theorem}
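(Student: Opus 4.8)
The plan is to follow the standard convergence analysis for Lloyd-type coordinate descent (as in the $k$-means and Bregman-clustering settings of \citep{macqueen1967some,banerjee2005clustering}), adapted to the dissimilarity $d^2_{net}$. The two ingredients are monotone non-increase of $L$ and finiteness of the range of values $L$ can take; together these force termination. For monotonicity, I would fix an iteration $t$ with current representatives $\mathcal{M}^{(t)}$ and partition $\mathcal{C}^{(t)}$. The assignment step chooses $\mathcal{C}^{(t+1)} \in \argmin_{\mathcal{C}} L(\mathcal{C}, \mathcal{M}^{(t)})$, and since $\mathcal{C}^{(t)}$ is a feasible competitor, $L(\mathcal{C}^{(t+1)}, \mathcal{M}^{(t)}) \le L(\mathcal{C}^{(t)}, \mathcal{M}^{(t)})$. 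The re-estimation step replaces $\mathcal{M}^{(t)}$ by cluster-wise minimizers $\mathcal{M}^{(t+1)}$ of (\ref{obj:step2}), so $L(\mathcal{C}^{(t+1)}, \mathcal{M}^{(t+1)}) \le L(\mathcal{C}^{(t+1)}, \mathcal{M}^{(t)})$. Chaining gives $L(\mathcal{C}^{(t+1)}, \mathcal{M}^{(t+1)}) \le L(\mathcal{C}^{(t)}, \mathcal{M}^{(t)})$, so $L$ never increases; it is bounded below by $0$, hence convergent. For $\lambda=0$ and $\lambda=1$ the re-estimation minimizer is the sample mean and the topological centroid of Lemma \ref{lemma:topmean}, respectively, and for $\lambda\in(0,1)$ it is obtained from the equivalent problem in Lemma \ref{lemma:interpolation}; in each case re-estimation returns a value no larger than the current one, which is all monotonicity requires.

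Next I would argue finite termination. The dataset contains finitely many networks, so there are only finitely many partitions $\mathcal{C}$ into $k$ clusters. Treating the re-estimation output as a deterministic function of the partition, the post-re-estimation objective is $\widetilde L(\mathcal{C}) = \min_{\mathcal{M}} L(\mathcal{C},\mathcal{M})$, which takes at most finitely many values. Along the iteration $L(\mathcal{C}^{(t)},\mathcal{M}^{(t)}) = \widetilde L(\mathcal{C}^{(t)})$ is non-increasing and lies in this finite set, hence eventually constant. I would then show no partition recurs before a fixed point is reached: if $\mathcal{C}^{(s)} = \mathcal{C}^{(t)}$ with $s<t$, monotonicity forces every intermediate inequality to be an equality, so the assignment step produced no strict decrease, and under a fixed tie-breaking convention that retains the current assignment when costs are equal the partition did not actually change. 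Thus distinct iterations yield distinct partitions until stationarity, and with finitely many partitions the algorithm halts after finitely many steps.

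Finally, at termination both steps are stationary. The assignment step leaves $\mathcal{C}$ fixed, so every network is already assigned to its nearest representative, i.e. $\mathcal{C}$ is optimal given $\mathcal{M}$; the re-estimation step leaves $\mathcal{M}$ fixed, so each $M_h$ minimizes its within-cluster cost, i.e. $\mathcal{M}$ is optimal given $\mathcal{C}$. This is precisely a coordinate-wise (partial) minimum: $L$ cannot be reduced by reassignment alone or by re-estimation alone, which is the sense in which the terminal partition is \emph{locally optimal}. I would state explicitly that this is a fixed point of the alternating minimization and need not be a global optimum of (\ref{obj:clustering}).

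I expect the main obstacle to be the finite-termination argument rather than monotonicity: one must ensure the re-estimation output is a well-defined function of the partition (so that $\widetilde L$ takes finitely many values and cycling is impossible) and handle ties in the assignment step via a consistent convention. The $\lambda\in(0,1)$ case deserves particular care, since its re-estimation is performed by gradient descent rather than in closed form; for monotonicity it suffices that the gradient updates do not increase $L$, but to keep the clean no-cycling argument intact one should treat that re-estimation as a deterministic map of the partition (for instance, run from a fixed initialization under a fixed stopping rule) so that the finite-state reasoning continues to apply.
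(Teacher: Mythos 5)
Your proposal is correct and follows essentially the same route as the paper's proof: the same two-inequality coordinate-descent argument for monotonicity (assignment step, then re-estimation via Lemma \ref{lemma:topmean} for $\lambda=1$ and Lemma \ref{lemma:interpolation} with a sufficiently small gradient step for $\lambda\in(0,1)$), followed by finiteness of the set of partitions to force termination at a coordinate-wise (partial) optimum. You are in fact more careful than the paper on the termination step: the paper simply asserts that finitely many partitions plus monotone decrease yields termination, whereas you correctly observe that one also needs a tie-breaking convention in the assignment step and a deterministic re-estimation map (particularly for the gradient-descent case, where the paper's warm-start update is not a function of the partition alone) to rule out cycling among partitions of equal cost.
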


The proofs for Lemma \ref{lemma:interpolation} and Theorem \ref{lemma:localminimum} are provided in Appendix \ref{supp:proof}.

\section{Validation Using Simulated Networks}
\label{sec:validation}

Simulated networks of different topological structure are used to evaluate the clustering performance of the proposed approach relative to that of seven other methods.
We use the term {\em topology} to denote the proposed approach with $\lambda > 0$.  Clustering using the special case of $\lambda = 0$ is termed the {\em edge-weight} method. Recall the graph filtration defined in section \ref{subsec:bd-decomposition} decomposes the edge weights into two groups of birth and death values corresponding to connected components and cycles. In order to assess the impact of separating edge weights into birth and death sets, we evaluate an ad hoc method called {\em sort} that applies $k$-means to feature vectors obtained by simply sorting edge weights.

In addition, we evaluate several clustering algorithms that utilize network representations.
Two persistent homology clustering methods based on conventional two-dimensional barcodes \citep{otter2017roadmap} are evaluated. The {\em Wasserstein} method performs clustering using $k$-medoids based on Wasserstein distance between two-dimensional barcodes for connected components and cycles. The computational complexity of the {\em Wasserstein} method is managed by utilizing an approximation algorithm \citep{Lacombe2018LargeSC} to compute the Wasserstein distance. A Wasserstein barycenter clustering algorithm called {\em Bregman Alternating Direction Method of Multipliers} (B-ADMM) \citep{ye2017fast} is used to cluster two-dimensional barcodes for cycles. In order to make B-ADMM computationally tractable, each barcode is limited to no more than the 50 most persistent cycles. 

The {\em Net Stats} clustering approach uses $k$-means to cluster  three-dimensional feature vectors composed of the following network summary statistics: (1) average weighted degree over all nodes \citep{rubinov2010complex}, (2) average clustering coefficient over all nodes \citep{rubinov2010complex} and (3) modularity \citep{newman2006modularity,reichardt2006statistical}.

Lastly, we evaluate two graph kernel based clustering approaches using kernel $k$-means \citep{dhillon2004kernel}: the {\em GraphHopper} kernel \citep{feragen2013scalable} and the {\em propagation} kernel \citep{neumann2016propagation}. Supplementary description and implementation details of the candidate methods are provided in Appendix \ref{supp:experiment}.

Initial clusters for all methods are selected at random.

\begin{wrapfigure}{R}{0.29\textwidth}
\vspace{-30pt}
\centering
\includegraphics[width=1\linewidth]{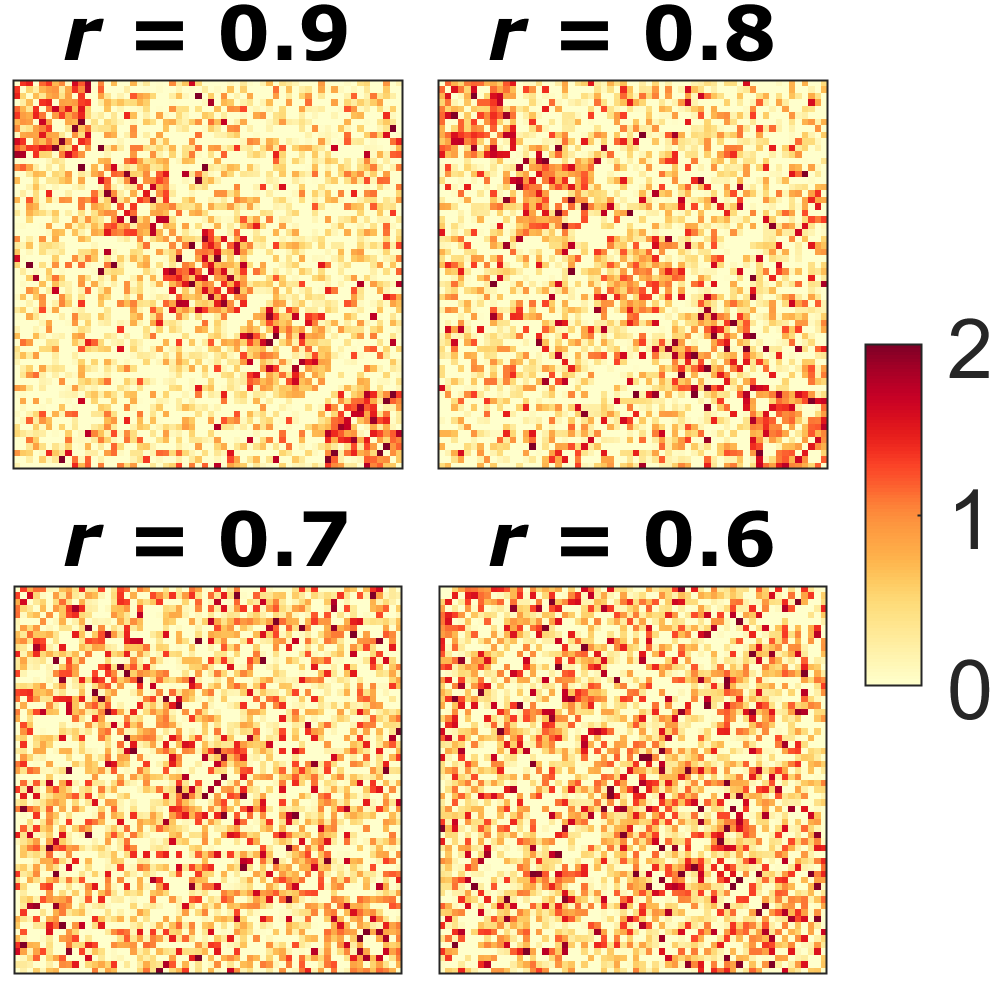}
\vspace{-15pt}
\caption{\small Example networks with $|V|=60$ nodes and $m=5$ modules exhibit different within-module connection probabilities $r = 0.9,0.8,0.7$ and $0.6$.
}
\vspace{-25pt}
\label{fig:modular_2net}
\end{wrapfigure}

\textbf{Modular network structure}
Random modular networks $\mathcal{X}_i$ are simulated with $|V|$ nodes and $m$ modules such that the nodes are evenly distributed among modules. Figure \ref{fig:modular_2net} displays modular networks with $|V|=60$ nodes and $m=5$ modules such that $|V|/m=12$ nodes are in each module.
Edges connecting two nodes within the same module are assigned a random weight following a normal distribution $\mathcal{N}(\mu,\sigma^2)$ with probability $r$ or otherwise Gaussian noise $\mathcal{N}(0,\sigma^2)$ with probability $1-r$.
On the other hand, edges connecting nodes in different modules have probability $1-r$ of being $\mathcal{N}(\mu,\sigma^2)$ and probability $r$ of being $\mathcal{N}(0,\sigma^2)$. The modular structure becomes more pronounced as the within-module connection probability $r$ increases. Any negative edge weights are set to zero. This procedure yields random networks $\mathcal{X}_i$ that exhibit topological connectedness.
We use $\mu=1$ and $\sigma=0.5$ universally throughout the study.

\textbf{Simulation}
Three groups of modular networks $L_1 = \{\mathcal{X}_i\}_{i=1}^{20}$, $L_2 = \{\mathcal{X}_i\}_{i=21}^{40}$ and $L_3 = \{\mathcal{X}_i\}_{i=41}^{60}$ corresponding to $m = 2, 3$ and $5$ modules, respectively, are simulated. This results in 60 networks in the dataset, each of which has a group label $L_1, L_2$ or $L_3$.
We consider $r = 0.9,0.8,0.7$ and $0.6$ to vary the strength of the modular structure, as illustrated in Figure \ref{fig:modular_2net}.

The dataset is partitioned into three clusters $C_1$, $C_2$ and $C_3$ using the candidate algorithms.
Clustering performance is then evaluated by first assigning each cluster to the group label that is most frequent in that cluster and then calculating the accuracy statistic $s$ as the fraction of correctly labeled networks, i.e.,
$ s = \frac{1}{60} \sum_{i=1}^3 \max_j \{|C_i \cap L_j|\}, $
where $|C_i \cap L_j|$ denotes the number of common networks in both $C_i$ and $L_j$. 
Note this evaluation of clustering performance is called {\em purity}, which not only is transparent and interpretable but also works well in this simulation study where the number and size of clusters are small and balanced, respectively \citep{manning2008introduction}.
Since the distribution of the accuracy $s$ is unknown, a permutation test is used to determine the empirical distribution under the null hypothesis that sample networks and their group labels are independent \citep{ojala2010permutation}. The empirical distribution is calculated by repeatedly shuffling the group labels and then re-computing the corresponding accuracy for one million random permutations. 
By comparing the observed accuracy to this empirical distribution, we can determine the statistical significance of the clustering performance. 
The $p$-value is calculated as the fraction of permutations that give accuracy higher than the observed accuracy $s$.
The average $p$-value and average accuracy across 100 different initial assignments are reported.

\begin{figure}
\centering
\includegraphics[width=1\linewidth]{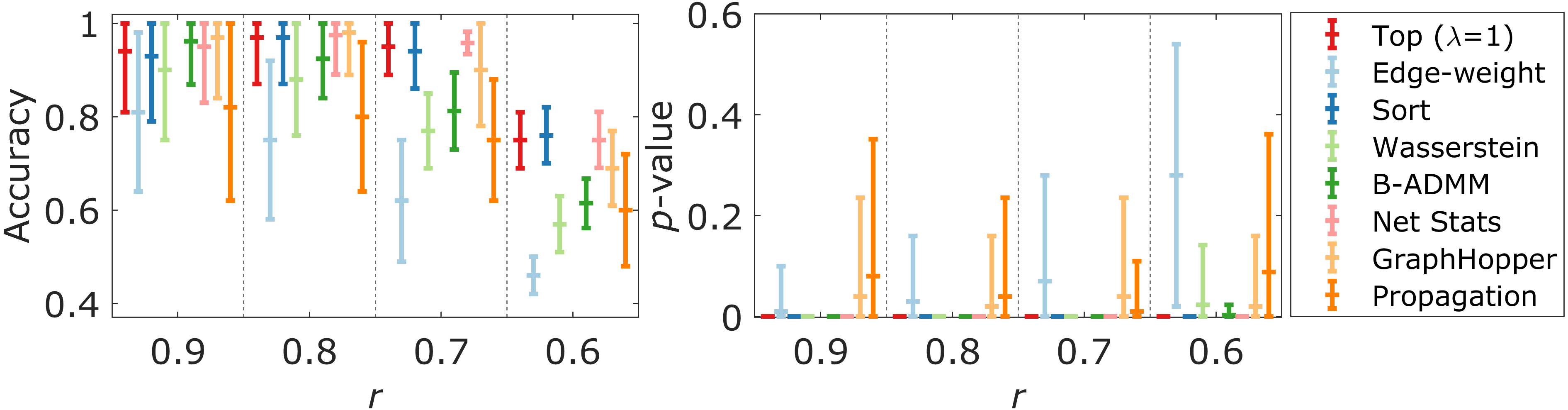}
\caption{Clustering performance comparison for the dataset of simulated networks with $|V|=60$ nodes and $m = 2, 3, 5$ modules with respect to average accuracy (left) and average $p$-values (right). Results for within-module connection probabilities $r = 0.6,0.7,0.8$ and $0.9$ are shown. Data points (middle horizontal lines) indicate the averages over results for 100 different initial conditions, and vertical error bars indicate standard deviations.}
\label{fig:validation}
\end{figure}

\begin{wrapfigure}{R}{0.45\textwidth}
\vspace{-15pt}
\centering
\includegraphics[width=1\linewidth]{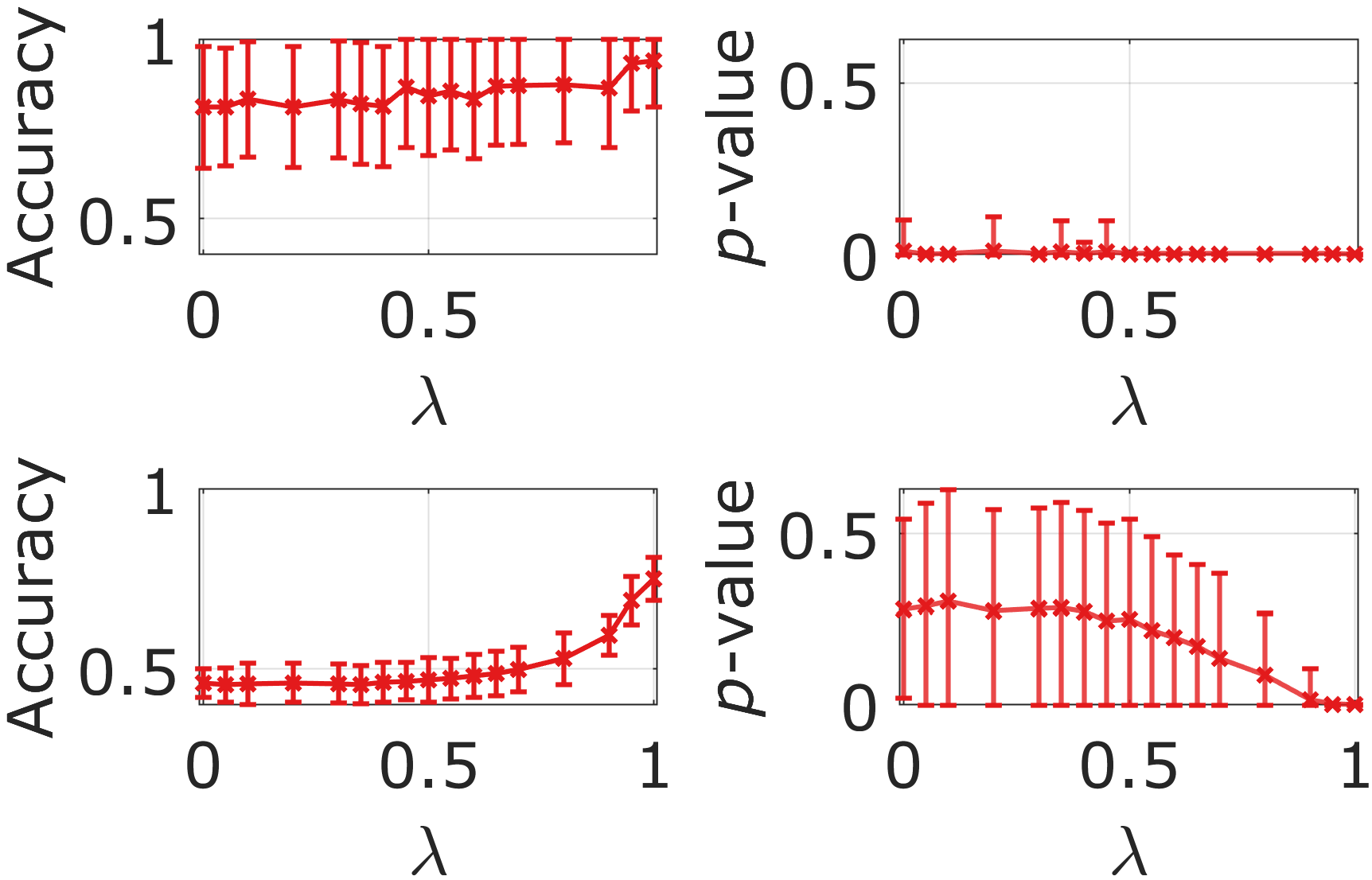}
\vspace{-15pt}
\caption{\small Clustering performance for simulated networks with $|V|=60$ nodes and $m=2,3,5$ modules as a function of $\lambda$ for within-module connection probabilities $r=0.9$ (top row) and $r=0.6$ (bottom row).
}
\vspace{-15pt}
\label{fig:sim_lambda}
\end{wrapfigure}

\textbf{Results}
Figure \ref{fig:validation} indicates that all methods considered perform relatively well with pronounced modularity ($r = 0.9$). As the modularity strength diminishes with decreasing $r$, clustering performance also decreases. The proposed topological clustering algorithm ($\lambda=1$) performs relatively well for the more nuanced modularity associated with $r = 0.7$ and $0.6$.
Since the dataset is purposefully generated to exhibit dependency between the sample networks and their group labels, the $p$-value indicates the degree of statistical significance to which structure is differentiated \citep{ojala2010permutation}.
Small $p$-values indicate the algorithm is able to differentiate network structure. The proposed method has very low $p$-values indicating that its improved accuracy over the baseline methods is significant.
The {\em Wasserstein} and B-ADMM methods require very high computation costs for the conventional two-dimensional barcodes \citep{otter2017roadmap}, and the approximations of Wasserstein distance \citep{Lacombe2018LargeSC} and barycenter \citep{ye2017fast}.
The ad hoc {\em sort} method performs nearly identically to the topology approach, showing that the separation into births of connected components and deaths of cycles has no apparent impact for these simulated networks with densely connected structure. However, we hypothesize the separation will impact performance when networks are sparse. Supplementary experimental results including an additional set of module sizes $m=2,5$ and $10$ are provided in Appendix \ref{supp:experiment}.

Figure \ref{fig:sim_lambda} displays topological clustering performance as a function of $\lambda$ for the strongest ($r = 0.9$) and weakest ($r = 0.6$) degree of modularity. The performance is not very sensitive to the value of $\lambda$. When the within-module connection probability is small ($r=0.6$),  increasing the weight of the topological distance by increasing $\lambda$ results in the best performance. We hypothesize that in this case the relatively strong random nature of the edge weights introduces noise into the geometric term that hinders performance as $\lambda$ decreases.  Conversely, the topological distance appears less sensitive to this particular edge weight noise, resulting in the best performance when $\lambda = 1$.

\section{Application to Functional Brain Networks}
\label{sec:application}

\begin{figure}
\includegraphics[width=1\linewidth]{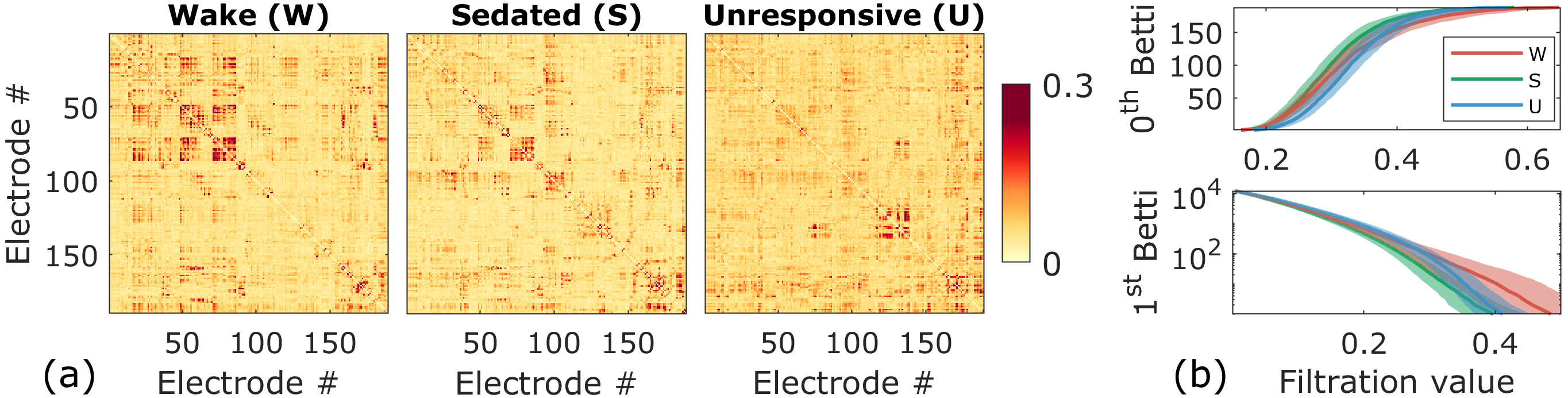}
\centering
\caption{Representative data from a single subject (ID R376). For this subject there are 36 measured networks in each of three conditions: wake, sedated, and unresponsive states for a total of 108 networks. (a) Sample mean networks during wake, sedated and unresponsive states of the subject computed using ground truth labels. (b) Betti plots based on ground truth labels. Thick lines represent topological centroids. Shaded areas around the centroids represent standard deviation.}
\label{fig:eeg_visualization}
\end{figure}

\textbf{Dataset}
We evaluate our method using an extended brain network dataset from the anesthesia study reported by \citet{banks2020cortical} (see Appendix \ref{supp:brain}). The measured brain networks are based on alpha band (8-12 Hz) weighted phase lag index \citep{vinck2011improved} applied to 10-second segments of resting state intracranial electroencephalography recordings from eleven neurosurgical patients administered increasing doses of the general anesthetic propofol just prior to surgery. The network size varies from 89 to 199 nodes while the number of networks (10-second segments) per subject varies from 71 to 119. Each segment is labeled as one of the three arousal states: pre-drug wake (W), sedated/responsive (S), or unresponsive (U); these labels are used as ground truth in the cluster analysis. Figure \ref{fig:eeg_visualization} illustrates sample mean networks and Betti plots describing topology for a representative subject.

\textbf{Performance evaluation}
We apply the adjusted Rand index (ARI) \citep{hubert1985comparing} to compare clustering performance against ground truth. 
Lower scores indicate less similarity while higher scores show higher similarity between estimated and ground-truth clusters. Perfect agreement is scored as 1.0.
For each subject, we calculate these performance metrics by running the algorithm for 100 different initial conditions, resulting in 100 scores which are then averaged. We also average across subjects ($11 \times 100$ scores) to obtain a final  overall score, which describes the overall output clusters across trials and subjects. We calculate average confusion matrices for each method by assigning each cluster to the state that is most frequent in that cluster.

\begin{wrapfigure}{R}{0.5\textwidth}
\vspace{-12pt}
\centering
\includegraphics[width=1\linewidth]{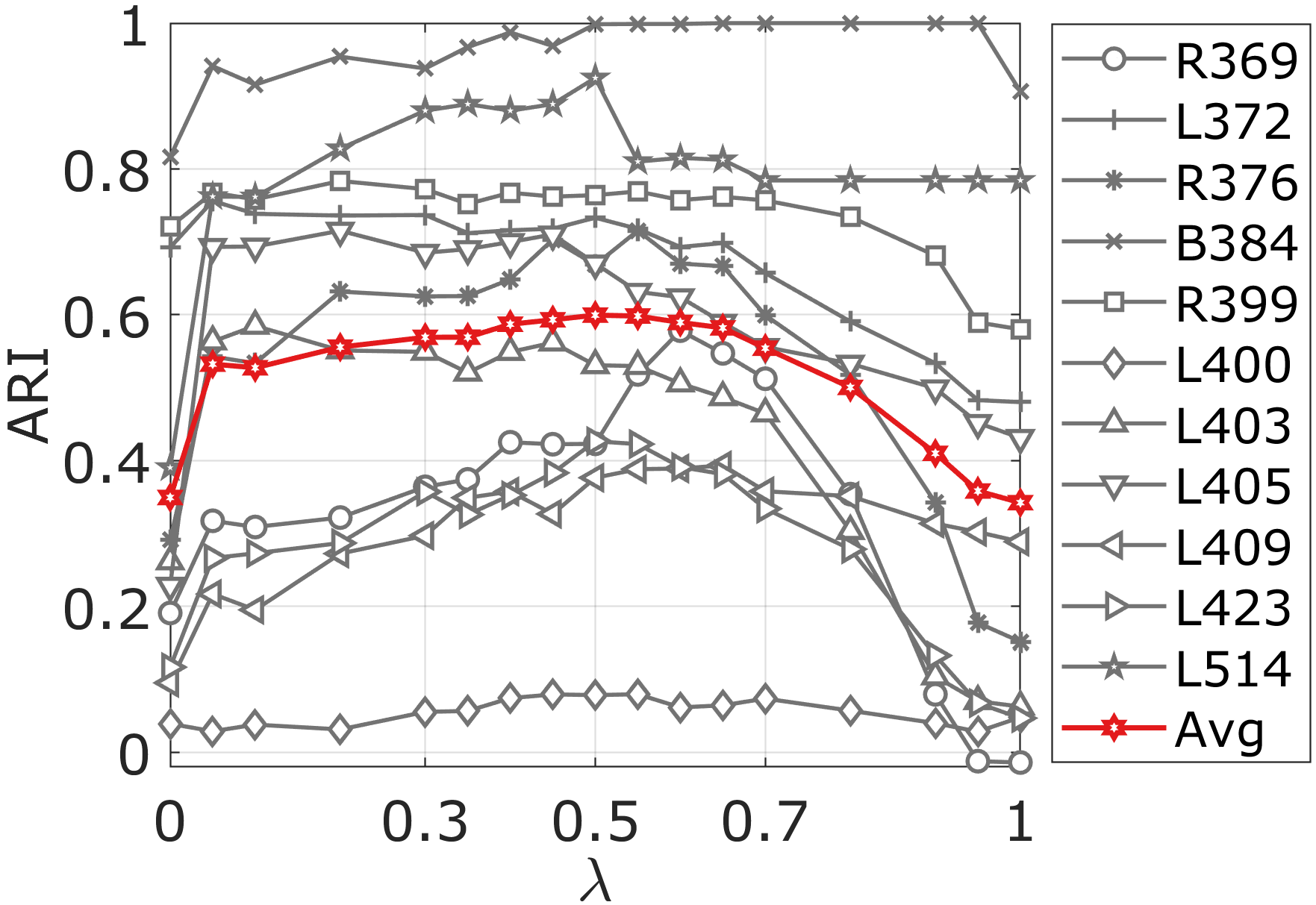}
\vspace{-17pt}
\caption{\small Average ARI per subject (gray lines) and across all eleven subjects (red line) as a function of $\lambda$.
}
\vspace{-15pt}
\label{fig:lambda}
\end{wrapfigure}

\textbf{Cluster analysis}
All baselines used in the simulation study are evaluated on the brain network dataset.
In addition, we consider clustering using $k$-medoids and three previously proposed network distance measures for brain network analyses: {\em Gromov-Hausdorff} (GH) \citep{lee2012persistent}; {\em Kolmogorov-Smirnov} (KS) \citep{chung2019exact}; and the {\em spectral} norm \citep{banks2020cortical}. Supplementary description of the three methods is provided in Appendix \ref{supp:experiment}.
Initial clusters are selected at random for all methods. Figure \ref{fig:lambda} reports the ARI of the topological approach for multiple predefined $\lambda$'s including $\lambda = 0$ and $\lambda = 1$. The relative performance of topological clustering is reported in Figures \ref{fig:brain_results} and \ref{fig:confusion_matrix} assuming $\lambda=0.5$, which results in equal weighting of topological and geometric distances.

\begin{figure}
\includegraphics[width=1\linewidth]{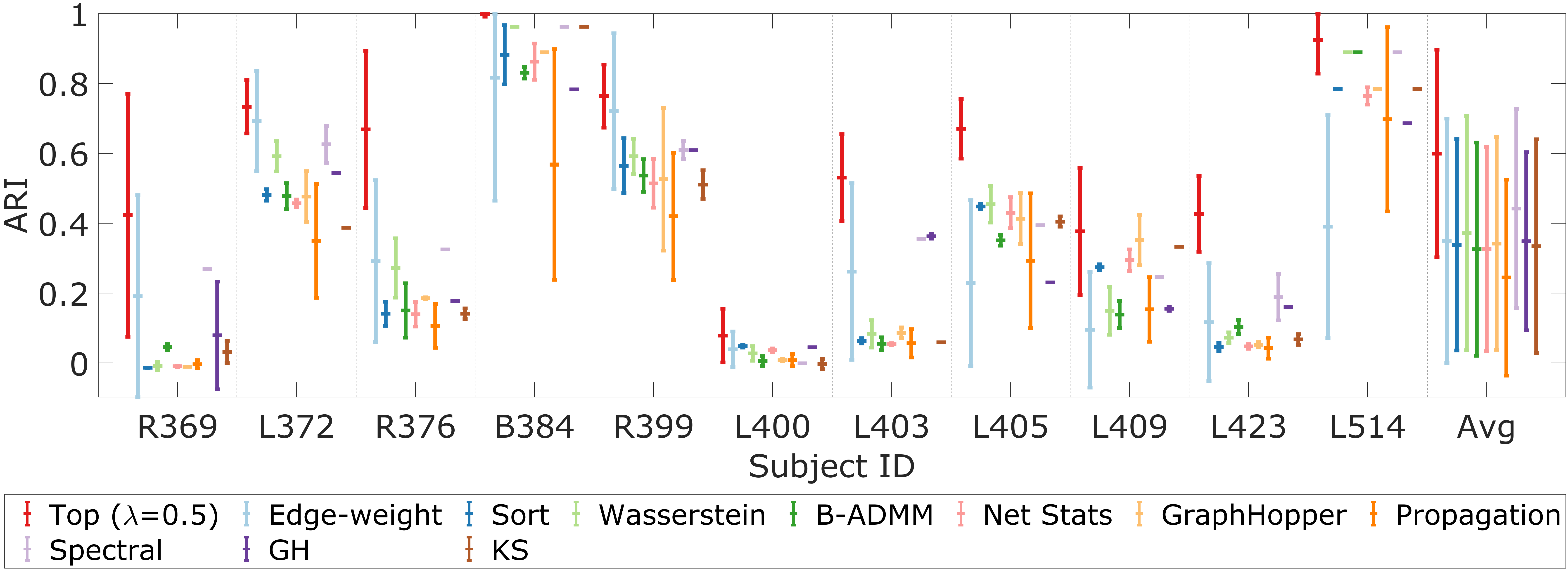}
\centering
\caption{ARI performance for the brain network dataset. Data points (middle horizontal lines) indicate averages over 100 random initial conditions, and error bars indicate standard deviations.}
\label{fig:brain_results}
\end{figure}

\begin{figure}
\includegraphics[width=1\linewidth]{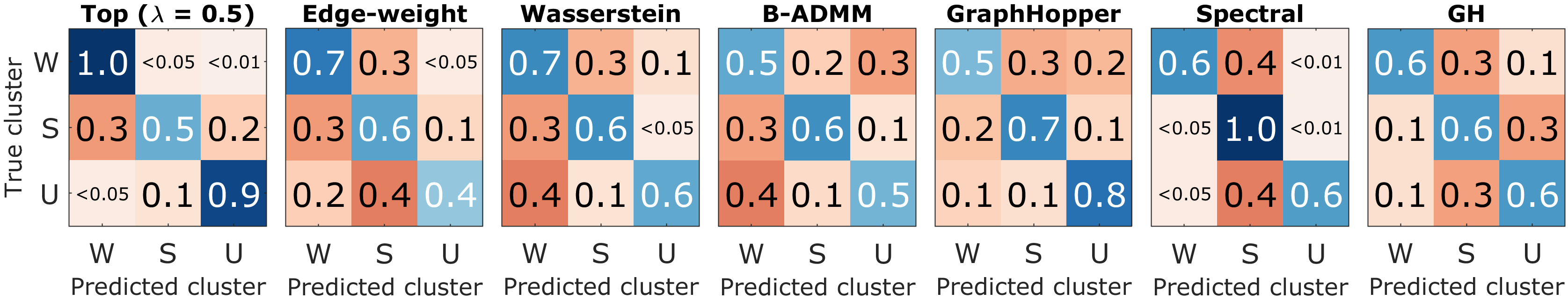}
\centering
\caption{Average confusion matrices over 100 random initializations for select methods for subject R376 (see Figure \ref{fig:eeg_visualization}). Results for the other methods are provided in Appendix \ref{supp:experiment}. 
}
\label{fig:confusion_matrix}
\end{figure}

\textbf{Results}
Figure \ref{fig:lambda} indicates that the performance on the brain network data set varies significantly across subjects, but in general varies stably with $\lambda$.  The best performance occurs with $\lambda$ between 0.4 and 0.7, suggesting that a combination of geometric and topological distances gives the best result and that performance is not sensitive to the exact value chosen for $\lambda$. Note that different regions of the brain are functionally differentiated, so it is plausible that topological changes due to changing arousal level vary with location, giving geometric distance a role in the clustering of brain networks.  Prior expectations of this sort can be used to determine whether to cluster based only on topology ($\lambda = 1$) or a combination of geometry and topology.

Figure \ref{fig:brain_results} compares the ARI performance metric for individual subjects and the combination of all the subjects.  All methods demonstrate significant variability in individual subject performance.  This is expected due to the variations in effective signal to noise ratio, the number of nodes, and the number of networks in each underlying state. Consequently the combined performance across subjects has high variability. The topological method with $\lambda = 0.5$ performs relatively well across all subjects. Figure \ref{fig:confusion_matrix} illustrates that the proposed approach is particularly effective at separating wake (W) and unresponsive (U) states.
Transitioning from the wake state to the unresponsive state results in dramatic changes in brain connectivity \citep{banks2020cortical}. The majority of errors from the proposed topological clustering approach are associated with the natural overlap between wake and sedated states (S). The sedated brain, in which subjects have been administered propofol but are still conscious, is expected to be more like the wake brain than the unresponsive brain. Thus, errors are expected to be more likely in differentiating sedated and wake states than sedated and unresponsive states.
This suggests that the types of errors observed in the proposed method are consistent with prior biological expectations.

\textbf{Impact}
\label{sec:discussion}
The demonstrated effectiveness and computational elegance of our approach to clustering networks based on topological similarity will have a high impact on the analysis of large and complex network representations. In the study of brain networks,
algorithms that can demonstrate correlates of behavioral states are of considerable interest. The derived biomarkers of changes in arousal state presented here demonstrate potential for addressing the important clinical problem of passively assessing arousal state in clinical settings, e.g., monitoring depth of anesthesia and in establishing diagnosis and prognosis for patients with traumatic brain injury and other disorders of consciousness. More broadly, the algorithm presented here will contribute to elucidating the neural basis of consciousness, one of the most important open problems in biomedical science.

\subsubsection*{Acknowledgments}
This work is supported in part by the National Institute of General Medical Sciences under award R01 GM109086, and the Lynn H. Matthias Professorship from the University of Wisconsin.

\bibliography{iclr2022_conference}
\bibliographystyle{iclr2022_conference}

\appendix

\captionsetup[table]{name=Supplementary Table}
\captionsetup[figure]{name=Supplementary Figure}
\setcounter{figure}{0}
\setcounter{lemma}{0}
\setcounter{theorem}{0}

\section*{Appendix}

\section{Proofs}
\label{supp:proof}

\subsection{Closed-form solution of topological distance}

The topological distance $d_{top}$ has the closed-form solution as follows
\[ d_{top}(G,H) = \Big( \sum_{b \in B(G)} \big[ b - \tau_0^*(b)\big]^2 + \sum_{d \in D(G)} \big[ d - \tau_1^*(d)\big]^2 \Big)^\frac{1}{2},
\]
where $\tau_0^*$ maps the $l$-th smallest birth value in $B(G)$ to the $l$-th smallest birth value in $B(H)$ and $\tau_1^*$ maps the $l$-th smallest death value in $D(G)$ to the $l$-th smallest death value in $D(H)$ for all $l$.

\begin{proof}
Let $G$ and $H$ be networks with $m$ nodes. The topological distance is originally defined as
\[ d_{top}(G,H) = \Big( \min_{\tau_0} \sum_{b \in B(G)} \big[ b - \tau_0(b)\big]^2 + \min_{\tau_1} \sum_{d \in D(G)} \big[ d - \tau_1(d)\big]^2 \Big)^{\frac{1}{2}} .\] 
We rewrite the first term as follows.
\begin{equation*}
    \begin{split}
        \min_{\tau_0} \sum_{b \in B(G)} \big[ b - \tau_0(b)\big]^2 &=  \min_{\tau_0} \sum_{b \in B(G)} \big[ b^2 - 2b\tau_0(b) + \tau_0^2(b) \big] \\
        &=  \min_{\tau_0} \sum_{b \in B(G)} - 2b\tau_0(b) + C_1 \\
        &=  \max_{\tau_0} \sum_{b \in B(G)} b\tau_0(b),
    \end{split}
\end{equation*}
where $C_1$ is a constant. Note that $\sum_{b \in B(G)}\tau_0^2(b)$ is equivalent to $\sum_{b \in B(H)}b^2$, which is a constant over all bijections $\tau_0$. Similarly for the second term, we have
\begin{equation*}
    \begin{split}
        \min_{\tau_1} \sum_{d \in D(G)} \big[ d - \tau_1(d)\big]^2
        &=  \max_{\tau_1} \sum_{d \in D(G)} d\tau_1(d).
    \end{split}
\end{equation*}
Let $B(G): x_1 \leq \cdots \leq x_{m-1}$ and $B(H): y_1 \leq \cdots \leq y_{m-1}$ be the collections of birth values for $G$ and $H$ respectively.
By the rearrangement inequality \citep{mitrinovic1970analytic}, it is known that 
\[x_1y_1 + \cdots + x_{m-1}y_{m-1} \geq x_{\pi(1)}y_1 + \cdots + x_{\pi(m-1)}y_{(m-1)}\]
for every permutation $x_{\pi(1)},...,x_{\pi(m-1)}$ of $x_1,...,x_{m-1}$. Thus, optimal bijection $\tau^*_0$ maps the $l$-th smallest birth value in $B(G)$ to the $l$-th smallest birth value in $B(H)$. Similarly for the second term, we have $\tau^*_1$ that maps the $l$-th smallest death value in $D(G)$ to the $l$-th smallest death value in $D(H)$.
\end{proof}

\subsection{Closed-form solution of topological centroid}
\begin{lemma}
\label{lem:topcentroid}
Let $G_1,...,G_n$ be $n$ networks each with $m$ nodes. 
Let $B(G_i): b_{i,1} \leq \cdots \leq b_{i,m-1}$ and $D(G_i): d_{i,1} \leq \cdots \leq d_{i,1+m(m-3)/2}$ be the topology of $G_i$.
It follows that the $l$-th smallest birth value of the topological centroid of the $n$ networks is given by the mean of all the $l$-th smallest birth values of such networks, i.e., 
$ \sum_{i=1}^n b_{i,l} / n $.
Similarly, the $l$-th smallest death value of the topological centroid is given by $\sum_{i=1}^n d_{i,l} / n$.
\end{lemma}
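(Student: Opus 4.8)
The plan is to exploit the closed-form expression for $d_{top}$ established above, which reduces the centroid problem to an unconstrained least-squares minimization that separates coordinate by coordinate. Since we optimize over topology directly, encode the sought centroid by sorted birth values $c_1 \le \cdots \le c_{m-1}$ and sorted death values $e_1 \le \cdots \le e_{1+m(m-3)/2}$. By the closed-form solution the optimal bijection $\tau_0^*$ pairs $c_l$ with the $l$-th smallest birth value $b_{i,l}$ of each $G_i$, and similarly $\tau_1^*$ pairs $e_l$ with $d_{i,l}$. Substituting these rank matchings into Eq. (\ref{eq:topmean}) turns the objective into $\sum_{i=1}^n \sum_l (c_l - b_{i,l})^2 + \sum_{i=1}^n \sum_l (e_l - d_{i,l})^2$, a sum of independent one-variable quadratics.

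First I would note that the birth and death contributions decouple completely and that, within each, the term for rank $l$ depends only on the single unknown $c_l$ (resp. $e_l$). Thus the whole problem splits into scalar minimizations of $g(c_l) = \sum_{i=1}^n (c_l - b_{i,l})^2$. Following the hint, I would set $g'(c_l) = 2\sum_{i=1}^n (c_l - b_{i,l}) = 0$, which yields $c_l = \frac{1}{n}\sum_{i=1}^n b_{i,l}$; since $g$ is a strictly convex quadratic this stationary point is the unique global minimizer. The identical computation gives $e_l = \frac{1}{n}\sum_{i=1}^n d_{i,l}$, matching the claimed formulas.

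The hard part will be justifying that the rank-matching substitution is legitimate even though the matchings $\tau_0^*, \tau_1^*$ are themselves determined by the (unknown) sorted order of the centroid's own values, so a priori one cannot freely differentiate. The clean way to handle this is to restrict the optimization to the closed convex cone $\{c_1 \le \cdots \le c_{m-1}\}$: since $d_{top}^2$ depends on the centroid's birth values only through their multiset, any candidate can be sorted without changing the objective, so minimizing over this cone loses nothing. On the cone the sorting map is the identity, the objective is the smooth quadratic above, and differentiation is valid. It then remains to check that the computed minimizer actually lies in the cone, i.e. that the coordinatewise means are nondecreasing. This is the one genuine verification: because $b_{i,l} \le b_{i,l+1}$ holds for every $i$ by the sorting of each $B(G_i)$, averaging preserves the inequality, giving $\frac{1}{n}\sum_i b_{i,l} \le \frac{1}{n}\sum_i b_{i,l+1}$, and likewise for the death values. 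Hence the stationary point is feasible and is the topological centroid.
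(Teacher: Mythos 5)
Your proposal is correct and follows essentially the same route as the paper's proof: substitute the optimal rank matchings $\tau_0^*, \tau_1^*$ from the closed-form solution of $d_{top}$, observe that the objective decouples into independent scalar quadratics in each $\gamma_l$ (your $c_l$) and $\delta_l$ (your $e_l$), and set derivatives to zero to obtain the coordinatewise means. The one place you go beyond the paper is a welcome one: the paper's proof writes the expansion with the centroid's $l$-th value matched to each $b_{i,l}$ and differentiates without comment, whereas you explicitly restrict to the monotone cone $\{c_1 \le \cdots \le c_{m-1}\}$ (justified because $d_{top}^2$ depends only on the multiset of centroid values) and verify feasibility of the stationary point by noting that averaging preserves the per-network orderings $b_{i,l} \le b_{i,l+1}$ and $d_{i,l} \le d_{i,l+1}$ --- a small but genuine gap in the paper's argument that your version closes.
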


\begin{proof}
Let $M$ be a cluster representative.
 Recall that the topological centroid minimizes the sum of the squared topological distances, i.e.,
$$\min_{M} \sum_{i=1}^n d_{top}^2(M,G_i) = \min_{B(M),D(M)} \sum_{i=1}^n \Big( \sum_{b \in B(M)} \big[ b - \tau_{0}^*(b)\big]^2 + \sum_{d \in D(M)} \big[ d - \tau_{1}^*(d)\big]^2 \Big) .$$
Let $B(M): \gamma_{1} \leq \cdots \leq \gamma_{m-1}$ and $D(M): \delta_{1} \leq \cdots \leq \delta_{1+m(m-3)/2}$ be the topology of $M$. The first term can be expanded as
$$\sum_{i=1}^n \Big( \sum_{b \in B(M)} \big[ b - \tau_{0}^*(b)\big]^2 \Big) = \sum_{i=1}^n \Big( [\gamma_1-b_{i,1}]^2 + \cdots + [\gamma_{m-1}-b_{i,m-1}]^2 \Big),$$
which is quadratic and thus can be solved by setting its derivative equal to zero. The solution is given by $\widehat \gamma_l = \sum_{i=1}^n b_{i,l} / n$. Similarly, the second term can be expanded as
$$\sum_{i=1}^n \Big( \sum_{d \in D(M)} \big[ d - \tau_{1}^*(d)\big]^2 \Big) = \sum_{i=1}^n \Big( [\delta_1-d_{i,1}]^2 + \cdots + [\delta_{1+m(m-3)/2}-d_{i,1+m(m-3)/2}]^2 \Big),$$
which is again quadratic. By setting its derivative equal to zero, we find the minimum at $\widehat \delta_l = \sum_{i=1}^n d_{i,l} / n$.

\end{proof}

\subsection{Topological interpolation}
\begin{lemma}
\label{lem:topinterp}
\begin{equation}
\sum_{h=1}^k \min_{\Theta_h} \sum_{G \in C_h} d_{net}(\Theta_h,G) = \sum_{h=1}^k \min_{\Theta_h} \Big( (1-\lambda)\sum_{i}\sum_{j>i} \big(\theta_{h,ij} - \overline w_{h,ij} \big)^2 + \lambda d_{top}^2\big(\Theta_h, \widehat M_{top,h}\big) \Big), 
\end{equation}
where $\bm{\overline w_h} = (\overline w_{h,ij})$ are edge weights in the sample mean network $\overline M_{h} = (V,\bm{\overline w_h})$ of cluster $C_h$, and $\widehat M_{top,h}$ is the topological centroid of networks in cluster $C_h$. 
\end{lemma}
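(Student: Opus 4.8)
The plan is to fix a single cluster $C_h$, write $n_h = |C_h|$, and show that the inner objective $\sum_{G \in C_h} d_{net}^2(\Theta_h, G)$ equals $n_h$ times the bracketed objective on the right-hand side plus a constant that is independent of $\Theta_h$. Since $n_h > 0$ and the constant is irrelevant to the minimization, the two $\min_{\Theta_h}$ problems share the same minimizer, so optimizing the right-hand bracket suffices; summing over $h$ then yields the claim. First I would expand $d_{net}^2$ via its definition (\ref{eq:dnet}) into a geometric term and $\lambda\, d_{top}^2$, and treat the two terms independently.

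For the geometric term I would apply the elementary sum-of-squares identity $\sum_{G}(a - x_G)^2 = n_h(a - \overline x)^2 + \sum_{G}(x_G - \overline x)^2$ edge by edge. Taking $a = \theta_{h,ij}$ and $x_G = w_{ij}^G$, the cross term vanishes because $\sum_{G}(w_{ij}^G - \overline w_{h,ij}) = 0$ by definition of the sample mean $\overline w_{h,ij}$. The residual $\sum_{G}(w_{ij}^G - \overline w_{h,ij})^2$ is independent of $\Theta_h$, so summing over all pairs $j > i$ shows the geometric contribution equals $n_h (1-\lambda)\sum_i\sum_{j>i}(\theta_{h,ij} - \overline w_{h,ij})^2$ up to an additive constant.

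For the topological term I would invoke the closed-form solution (\ref{prop:1dwass}): for a fixed $\Theta_h$ with sorted births $\gamma_1 \le \cdots \le \gamma_{m-1}$ and sorted deaths $\delta_1 \le \cdots \le \delta_{1+m(m-3)/2}$, each $d_{top}^2(\Theta_h, G)$ equals $\sum_l (\gamma_l - b_l^G)^2 + \sum_l (\delta_l - d_l^G)^2$, where $b_l^G, d_l^G$ are the sorted births and deaths of $G$. The decisive observation is that the optimal matching is always sorted-to-sorted, so the same coordinates $\gamma_l, \delta_l$ appear in every $G$-summand and the sum over $G$ separates coordinate by coordinate. Applying the identity above in each sorted coordinate gives $\sum_{G}(\gamma_l - b_l^G)^2 = n_h(\gamma_l - \overline b_{h,l})^2 + \text{const}$ with $\overline b_{h,l} = \tfrac{1}{n_h}\sum_{G \in C_h} b_l^G$, and by Lemma \ref{lem:topcentroid} this coordinate-wise mean is exactly the $l$-th smallest birth value of the topological centroid $\widehat M_{top,h}$ (and analogously for the deaths $\overline d_{h,l}$). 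Re-assembling the sorted coordinates and reading (\ref{prop:1dwass}) in reverse then identifies $\sum_l(\gamma_l - \overline b_{h,l})^2 + \sum_l(\delta_l - \overline d_{h,l})^2$ as $d_{top}^2(\Theta_h, \widehat M_{top,h})$, so the topological contribution equals $n_h \lambda\, d_{top}^2(\Theta_h, \widehat M_{top,h})$ up to a constant.

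I expect the main obstacle to be the topological term, precisely because $d_{top}^2$ is itself defined through a minimization over bijections rather than as a fixed quadratic form. The argument only closes because the closed-form matching of (\ref{prop:1dwass}) is sorted-to-sorted against every target simultaneously: this is what lets me factor a single family of sorted coordinates $\{\gamma_l\},\{\delta_l\}$ of $\Theta_h$ out of all $n_h$ distance terms and decompose coordinate-wise, and it is what makes the coordinate-wise mean coincide with a genuine topological centroid via Lemma \ref{lem:topcentroid} rather than an artifact of an ad hoc sorting. Care is also needed to record that the reduction preserves the objective only up to the positive multiplicative factor $n_h$ and an additive constant; since neither affects $\argmin_{\Theta_h}$, it is legitimate to replace the cluster-wide sum by the single interpolation problem between $\overline M_h$ and $\widehat M_{top,h}$, which is the practically useful content of the lemma.
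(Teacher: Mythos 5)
Your proof is correct and follows essentially the same route as the paper's: complete the square edge-by-edge in the geometric term and, via the sorted-to-sorted closed form (\ref{prop:1dwass}), coordinate-by-coordinate in the sorted births and deaths, identifying the coordinate-wise means with the topological centroid through Lemma \ref{lemma:topmean}. If anything you are more careful than the paper, whose final step silently drops the factor $|C_h|$ and the additive constant -- the stated equality genuinely holds only as equivalence of minimizers, exactly the caveat you record.
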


\begin{proof}
Let $G=(V,\bm{w}) \in C$ and $\Theta=(V,\bm{\theta})$. Denote the cardinality of $C$ by $m$. We will show that
\[\min_{\Theta} \sum_{G \in C} d_{net}(\Theta,G) = \min_{\Theta} \Big( (1-\lambda)\sum_{i}\sum_{j>i} \big(\theta_{ij} - \overline w_{ij} \big)^2 + \lambda d_{top}^2(\Theta, \widehat M_{top}) \Big).\]
We first rewrite the geometric term as follows.
\begin{equation*}
\begin{split}
    \sum_{G \in C} \sum_{i}\sum_{j>i} \big(\theta_{ij} - w_{ij} \big)^2 &= \sum_{G \in C} \sum_{i}\sum_{j>i} \big(\theta_{ij}^2 - 2\theta_{ij}w_{ij} + w_{ij}^2 \big) \\ 
    &= \sum_{i}\sum_{j>i} \big(m\theta_{ij}^2 - 2\theta_{ij} \sum_{G \in C}w_{ij} + \sum_{G \in C}w_{ij}^2 \big) \\
    &= m\sum_{i}\sum_{j>i} \big(\theta_{ij}^2 - \frac{2\theta_{ij}}{m} \sum_{G \in C}w_{ij} \big) + C_1 \\
    &= m\sum_{i}\sum_{j>i} \big(\theta_{ij}^2 - \frac{2\theta_{ij}}{m} \sum_{G \in C}w_{ij} + (\sum_{G \in C}w_{ij}/m)^2 \big) + C_2 \\
    &= m\sum_{i}\sum_{j>i} \big(\theta_{ij} - \sum_{G \in C}w_{ij}/m \big)^2 + C_2 \\
    &= m\sum_{i}\sum_{j>i} \big(\theta_{ij} - \overline w_{ij} \big)^2 + C_2,
\end{split}
\end{equation*}
where $C_1,C_2$ are constants. Similarly for the topological term, we have
\begin{equation*}
\begin{split}
    \sum_{G \in C} d^2_{top}\big(\Theta,G \big) &= \sum_{G \in C} \Big( \sum_{b \in B(\Theta)} \big[ b - \tau_0^*(b)\big]^2 + \sum_{d \in D(\Theta)} \big[ d - \tau_1^*(d)\big]^2 \Big) \\
    &= \sum_{b} \big[ mb^2 - 2b \sum_{G \in C}\tau_0^*(b) \big] + \sum_{d} \big[ md^2 - 2d \sum_{G \in C} \tau_1^*(d)\big] + C_3 \\
    &= m \Big(\sum_{b} \big[ b - \sum_{G \in C}\tau_0^*(b)/m \big]^2 + \sum_{d} \big[ d - \sum_{G \in C} \tau_1^*(d)/m \big]^2 \Big) + C_4 \\
    &= m d_{top}^2(\Theta, \widehat M_{top}) + C_4.
\end{split}
\end{equation*}
Thus,
\begin{equation*}
\begin{split}
    \min_{\Theta} \sum_{G \in C} d_{net}(\Theta,G) &= \min_{\Theta} (1-\lambda) \Big( m\sum_{i}\sum_{j>i} \big(\theta_{ij} - \overline w_{ij} \big)^2 + C_2 \Big) + \lambda \Big( m d_{top}^2(\Theta, \widehat M_{top}) + C_4 \Big) \\
    &= \min_{\Theta} m \Big( (1-\lambda) \sum_{i}\sum_{j>i} \big(\theta_{ij} - \overline w_{ij} \big)^2 + \lambda d_{top}^2(\Theta, \widehat M_{top}) \Big) + C_5 \\
    &= \min_{\Theta} \Big( (1-\lambda) \sum_{i}\sum_{j>i} \big(\theta_{ij} - \overline w_{ij} \big)^2 + \lambda d_{top}^2(\Theta, \widehat M_{top}) \Big).
\end{split}
\end{equation*}
\end{proof}

\subsection{Convergence to a locally optimal partition}
\begin{theorem}
The topological clustering algorithm monotonically decreases $L$ and terminates in a finite number of steps at a locally optimal partition.
\end{theorem}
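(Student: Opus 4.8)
The plan is to run the classical coordinate-descent convergence argument (as for Lloyd's $k$-means) specialized to the two alternating steps of the algorithm. First I would verify that each step is non-increasing in $L$. In the assignment step $\mathcal{M}$ is held fixed and each network $G$ is reassigned to the cluster whose representative is nearest under $d^2_{net}$; since $L$ is a sum over networks of the dissimilarity to the assigned representative, choosing the nearest representative for every $G$ minimizes $L(\cdot,\mathcal{M})$ over all partitions, so $L$ cannot increase. In the re-estimation step $\mathcal{C}$ is held fixed and the representatives are updated to minimize (or, for $\lambda\in(0,1)$, to decrease via a descent step) the per-cluster objective: by Lemma~\ref{lem:topcentroid} the optimal topology is the coordinatewise mean when $\lambda=1$, the $\lambda=0$ case gives the sample mean, and by Lemma~\ref{lem:topinterp} the $\lambda\in(0,1)$ update reduces to topological interpolation between $\overline M_{h}$ and $\widehat M_{top,h}$. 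In every case $L(\mathcal{C},\cdot)$ does not increase, so composing the two steps shows $L$ is monotonically non-increasing along the iterations.

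Next I would establish finite termination. Because $d^2_{net}$ is a sum of squared distances, $L\ge 0$ is bounded below, so the non-increasing sequence of objective values converges. The decisive structural fact is that there are only finitely many partitions of the $n$ observed networks into $k$ clusters (at most $k^n$). After a re-estimation step the representatives are a deterministic function of the current partition, so the objective value immediately following re-estimation is a function $\Phi(\mathcal{C})$ of the partition alone, and $\Phi$ takes only finitely many values. I would then argue that the algorithm cannot return to a previously visited partition without having already reached a fixed point: a recurrence would produce a cycle, but along any genuine change of partition $L$ strictly decreases (after fixing a tie-breaking convention in the assignment step), which is incompatible with returning to an earlier, larger value of $\Phi$. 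Hence each partition is visited at most once and the outer loop halts after finitely many iterations.

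Finally I would read off local optimality from the stopping condition. At termination the state $(\mathcal{C}^*,\mathcal{M}^*)$ is a fixed point of both steps: $\mathcal{C}^*$ already minimizes $L(\cdot,\mathcal{M}^*)$ (otherwise the assignment step would change it) and $\mathcal{M}^*$ already minimizes $L(\mathcal{C}^*,\cdot)$ (otherwise the re-estimation step would change it). This coordinatewise optimality is exactly the statement that no single reassignment of a network and no re-estimation of a representative can lower $L$, i.e.\ $(\mathcal{C}^*,\mathcal{M}^*)$ is a locally optimal partition.

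I expect the main obstacle to be the finite-termination argument rather than the monotonicity, and specifically the justification of the strict-decrease / no-cycling claim. This needs a consistent tie-breaking rule so that the partition changes only when it strictly lowers $L$, and it needs interpreting ``finite number of steps'' at the level of the outer assignment--re-estimation loop: for $\lambda\in(0,1)$ the inner gradient descent need not reach the exact per-cluster minimizer, so I would lean only on the facts that each re-estimation step does not increase $L$ and that its output is determined by the partition, which is all that the $\Phi(\mathcal{C})$ bookkeeping requires.
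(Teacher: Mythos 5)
Your proposal is correct and follows essentially the same route as the paper's own proof: monotone descent via the two coordinate steps (assignment, then re-estimation justified by Lemma~1 for $\lambda=1$ and Lemma~2 with a sufficiently small gradient step for $\lambda\in(0,1)$), followed by finiteness of the set of partitions to force termination at a coordinatewise (locally optimal) fixed point. If anything you are more careful than the paper, which simply asserts that finiteness of partitions plus monotonicity yields termination, whereas you make explicit the tie-breaking convention and the no-cycling bookkeeping via $\Phi(\mathcal{C})$ needed to rule out revisiting partitions.
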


\begin{proof}
Let $\mathcal{C}^{(t)}=\{C_h^{(t)}\}_{h=1}^k$ be a partition of observed networks with corresponding representatives $\mathcal{M}^{(t)}=\{M_h^{(t)}\}_{h=1}^k$ after the $t$-th iteration. Then,
\begin{equation*}
\begin{split}
    L(\mathcal{C}^{(t)},\mathcal{M}^{(t)}) &= \sum_{h=1}^k \sum_{G \in C_h^{(t)}} d_{net}(M_h^{(t)},G) \\ 
    &\stackrel{\text{(a)}}{\geq} \sum_{h=1}^k \sum_{G \in C_h^{(t+1)}} d_{net}(M_h^{(t)},G) \\
    &\stackrel{\text{(b)}}{\geq} \sum_{h=1}^k \sum_{G \in C_h^{(t+1)}} d_{net}(M_h^{(t+1)},G) =  L(\mathcal{C}^{(t+1)},\mathcal{M}^{(t+1)}),
\end{split}
\end{equation*}
where (a) follows from the assignment step and
(b) follows from applying Lemma \ref{lem:topcentroid} (when $\lambda=1$) and Lemma \ref{lem:topinterp} (when $\lambda \in(0,1)$ and step size in gradient descent is sufficiently small) to the re-estimation step. Note that $\lambda=0$ describes conventional edge clustering \citep{macqueen1967some}. This result shows that the algorithm monotonically decreases $L$.
Since the number of distinct partitions is finite and $L$ is monotonically decreasing, eventually $L$ will not be decreased either by (a) the assignment step or (b) re-estimation step and the algorithm terminates.
\end{proof}

\section{Experimental Details}
\label{supp:experiment}

\subsection{Supplementary description for candidate methods}

\subsubsection*{Methods based on two-dimensional barcodes}
The {\em Wasserstein} method clusters conventional two-dimensional barcodes representing connected components and cycles. To this end, two separate Wasserstein distances are needed to match connected components to connected components and cycles to cycles. The {\em Wasserstein} method performs clustering using $k$-medoids based on the sum of the two Wasserstein distances for connected components and cycles.

{\em Bregman Alternating Direction Method of Multipliers} (B-ADMM) \citep{ye2017fast} is a Wasserstein barycenter based algorithm for clustering discrete distributions. We follow the standard approach \citep{turner2014frechet} to representing underlying discrete distributions on two-dimensional barcodes in the form of Dirac masses. The high computational complexity of B-ADMM is managed by limiting the clustering to no more than the 50 most persistent cycles.

The two methods require computation of two-dimensional persistence barcodes from networks. We compute a two-dimensional persistence barcode using the approach of \citet{otter2017roadmap} in which edge weights are inverted via the function $f(w)=1/(1+w)$. Then a point cloud is obtained from the shortest path distance between nodes. Finally, we compute Rips filtration \citep{ghrist2008barcodes} and its corresponding persistence barcode from the point cloud.

\subsubsection*{Methods based on graph kernels}
The two graph kernel methods based on {\em GraphHopper} and {\em propagation} kernels require node continuous attributes. We follow the experimental protocol of \citet{borgwardt2020graph} in which a node attribute is set to the sum of edge weights incident to the node.

\subsubsection*{Methods based on network distance measures for brain network analyses}
{\em Gromov-Hausdorff} (GH) distance \citep{lee2012persistent} compares dendrogram shape differences in brain networks. GH distance requires brain networks that exhibit metric spaces. To this end, we follow the approach of \citet{otter2017roadmap} for converting a brain network to a metric space. The metric space is obtained by first inverting edge weights via the function $f(w)=1/(1+w)$ and then using the shortest path distance between nodes as a metric.

{\em Kolmogorov-Smirnov} (KS) distance \citep{chung2019exact} compares topological differences in brain networks using their Betti numbers. Two separate KS distances are needed to compare 0-th Betti numbers to 0-th Betti numbers and 1-st Betti numbers to 1-st Betti numbers. The sum of the two KS distances is used to compare brain networks.

We follow the approach of \citet{banks2020cortical} to comparing brain networks using the operator or {\em spectral} norm of the difference between network adjacency matrices.

\subsection{Implementation of candidate methods}
For all the baseline methods, we used existing implementation codes from authors’ publications and publicly available repository websites. We used parameters recommended in the public code without any modification.
Code for Wasserstein distance approximation \citep{Lacombe2018LargeSC} is available at \url{https://gudhi.inria.fr/}. Code for {\em Bregman Alternating Direction Method of Multipliers} (B-ADMM) method \citep{ye2017fast} is available at \url{https://github.com/bobye/WBC_Matlab}. Code for the three network summary statistics used in the {\em Net Stats} method is available at \url{https://sites.google.com/site/bctnet/}. Code for {\em GraphHopper} and {\em propagation} graph kernels is available at \url{https://github.com/ysig/GraKeL}.

\subsection{Supplementary results for simulated and measured datasets} 
For simulated networks, clustering performance comparison is reported in Supplementary Table \ref{supptab:sim2}. For measured brain networks, confusion matrices of all the candidate methods for subject R376 (see Supplementary Table \ref{supptab:dataset}) are reported in Supplementary Figure \ref{suppfig:confusion}.

\begin{table}
\caption{Clustering performance comparison for simulated networks with $|V|=60$ nodes. (a) average accuracy and (b) average $p$-values for various parameter settings of $m$ (number of modules) and $r$ (within-module connection probability).}
\label{supptab:sim2}

\setlength{\tabcolsep}{3pt}
\begin{subtable}{\textwidth}
    \centering
    \resizebox{\textwidth}{!}{
    \begin{tabular}{cccccccccc}
        $m$ & $r$ & Top ($\lambda = 1$) & Edge-weight & Sort & Wasserstein & B-ADMM & Net Stats & GraphHopper & Propagation  \\
        \midrule
        
        \multirow{4}{*}{2/3/5} & \multirow{1}{*}{0.6} & $0.75\pm0.06$ & $0.46\pm0.04$ & $0.76\pm0.06$ & $0.57\pm0.06$ & $0.61\pm0.05$ & $0.75\pm0.06$ & $0.69\pm0.08$ & $0.60\pm0.12$ \\
                               & \multirow{1}{*}{0.7} & $0.95\pm0.06$ & $0.62\pm0.13$ & $0.94\pm0.08$ & $0.77\pm0.08$ & $0.81\pm0.08$ & $0.96\pm0.02$ & $0.90\pm0.12$ & $0.75\pm0.13$ \\
                               & \multirow{1}{*}{0.8} & $0.97\pm0.10$ & $0.75\pm0.17$ & $0.97\pm0.10$ & $0.88\pm0.12$ & $0.92\pm0.08$ & $0.98\pm0.09$ & $0.98\pm0.09$ & $0.80\pm0.16$  \\
                               & \multirow{1}{*}{0.9} & $0.94\pm0.13$ & $0.81\pm0.17$ & $0.93\pm0.14$ & $0.90\pm0.15$ & $0.96\pm0.09$ & $0.95\pm0.12$ & $0.97\pm0.13$ & $0.82\pm0.20$  \\
        \midrule \midrule
        \multirow{4}{*}{2/5/10} & \multirow{1}{*}{0.6}  & $0.78\pm0.07$ & $0.44\pm0.05$ & $0.78\pm0.07$ & $0.63\pm0.06$ & $0.66\pm0.05$ & $0.77\pm0.07$ & $0.73\pm0.11$ & $0.65\pm0.14$ \\
                                & \multirow{1}{*}{0.7}  & $0.85\pm0.13$ & $0.59\pm0.10$ & $0.85\pm0.12$ & $0.81\pm0.09$ & $0.86\pm0.07$ & $0.89\pm0.10$ & $0.85\pm0.21$ & $0.81\pm0.12$ \\
                                & \multirow{1}{*}{0.8}  & $0.90\pm0.14$ & $0.72\pm0.13$ & $0.89\pm0.15$ & $0.88\pm0.14$ & $0.91\pm0.13$ & $0.91\pm0.14$ & $0.85\pm0.26$ & $0.85\pm0.15$ \\
                                & \multirow{1}{*}{0.9}  & $0.92\pm0.14$ & $0.78\pm0.15$ & $0.92\pm0.14$ & $0.92\pm0.14$ & $0.91\pm0.15$ & $0.90\pm0.15$ & $0.73\pm0.33$ & $0.85\pm0.17$ \\
        \bottomrule
        
    \end{tabular}
    }
    \caption{Average accuracy}
\end{subtable}

\begin{subtable}{\textwidth}
    \centering
    \resizebox{\textwidth}{!}{
    \begin{tabular}{cccccccccc}
        $m$ & $r$ & Top ($\lambda = 1$) & Edge-weight & Sort & Wasserstein & B-ADMM & Net Stats & GraphHopper & Propagation  \\
        \midrule
        
        \multirow{4}{*}{2/3/5}  & \multirow{1}{*}{0.6}  & $<0.001$ & $0.28\pm0.26$ & $<0.001$ & $0.02\pm0.12$ & $0.003\pm0.021$ & $<0.001$ & $0.02\pm0.14$ & $0.09\pm0.27$ \\
                                & \multirow{1}{*}{0.7}  & $<0.001$ & $0.07\pm0.21$ & $<0.001$ & $<0.001$      & $<0.001$        & $<0.001$ & $0.04\pm0.20$ & $0.01\pm0.10$ \\
                                & \multirow{1}{*}{0.8}  & $<0.001$ & $0.03\pm0.13$ & $<0.001$ & $<0.001$      & $<0.001$        & $<0.001$ & $0.02\pm0.14$ & $0.04\pm0.20$ \\
                                & \multirow{1}{*}{0.9}  & $<0.001$ & $0.01\pm0.09$ & $<0.001$ & $<0.001$      & $<0.001$        & $<0.001$ & $0.04\pm0.20$ & $0.08\pm0.27$ \\
        \midrule \midrule
        \multirow{4}{*}{2/5/10} & \multirow{1}{*}{0.6}  & $<0.001$ & $0.37\pm0.33$   & $<0.001$ & $0.004\pm0.034$ & $<0.001$ & $<0.001$ & $0.06\pm0.24$ & $0.12\pm0.31$ \\
                                & \multirow{1}{*}{0.7}  & $<0.001$ & $0.05\pm0.14$   & $<0.001$ & $<0.001$        & $<0.001$ & $<0.001$ & $0.14\pm0.35$ & $0.01\pm0.10$ \\
                                & \multirow{1}{*}{0.8}  & $<0.001$ & $0.03\pm0.15$   & $<0.001$ & $<0.001$        & $<0.001$ & $<0.001$ & $0.20\pm0.40$ & $0.03\pm0.17$ \\
                                & \multirow{1}{*}{0.9}  & $<0.001$ & $0.004\pm0.039$ & $<0.001$ & $<0.001$        & $<0.001$ & $<0.001$ & $0.41\pm0.49$ & $0.04\pm0.20$ \\
        \bottomrule
       
    \end{tabular}
    }
    \caption{Average $p$-values}
\end{subtable}

\end{table}

\begin{figure}
\centering
\includegraphics[width=1\linewidth]{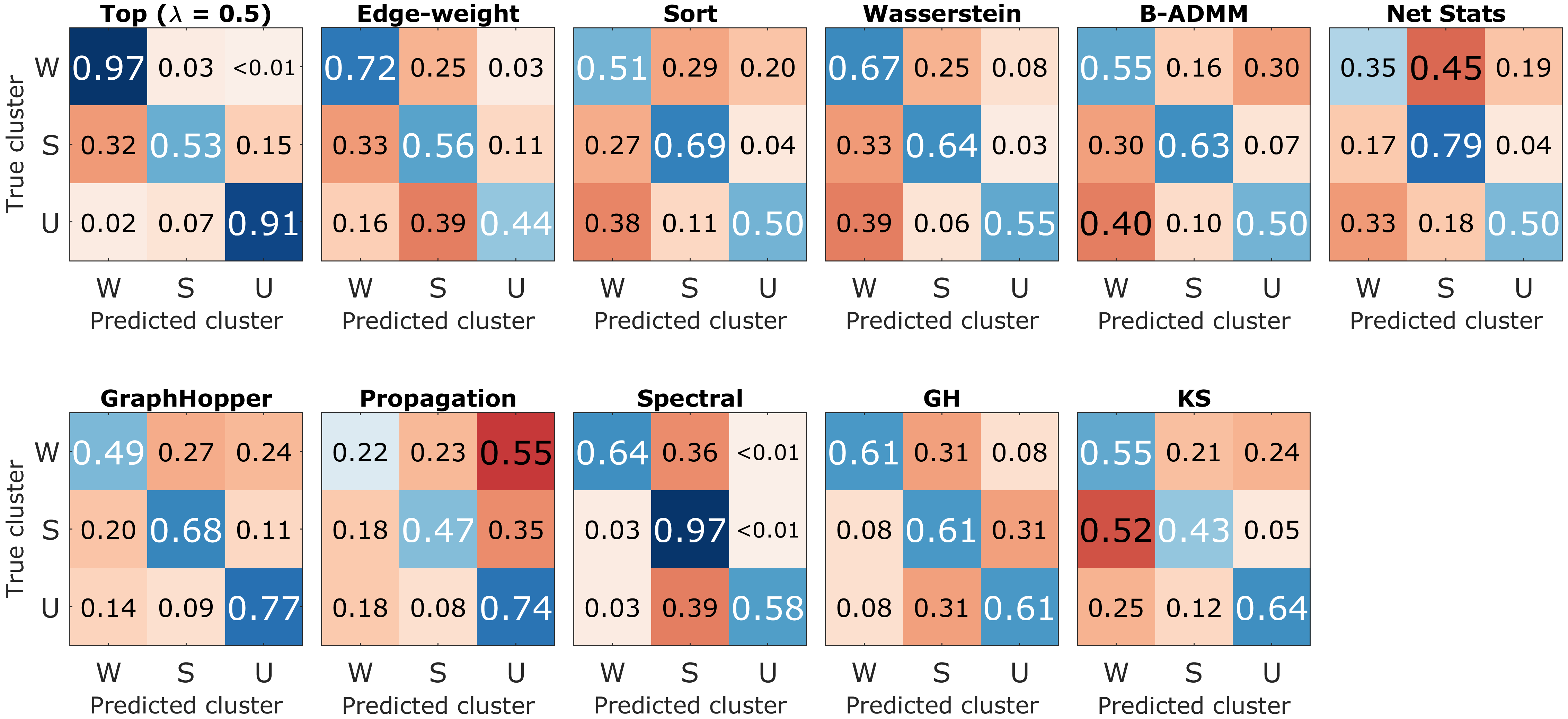}
\caption{Average confusion matrices over 100 random initializations for subject R376. W: wake, S: sedated, U: unresponsive.}
\label{suppfig:confusion}
\end{figure}

\section{Brain Network Dataset}
\label{supp:brain}

Brain network data were obtained from eleven neurosurgical patients (5 female, ages 19 - 59 years old, median age 38 years old; Supplementary Table \ref{supptab:dataset}). The patients had been diagnosed with medically refractory epilepsy and were undergoing chronic invasive intracranial electroencephalography (iEEG) monitoring to identify potentially resectable seizure foci. All human subjects experiments were carried out in accordance with The Code of Ethics of the World Medical Association (Declaration of Helsinki) for experiments involving humans. The research protocols were approved by the University of Iowa Institutional Review Board and the National Institutes of Health. Written informed consent was obtained from all subjects. Research participation did not interfere with acquisition of clinically required data. Subjects could rescind consent at any time without interrupting their clinical evaluation. Recordings were made using subdural and depth electrodes (Ad-Tech Medical, Oak Creek, WI) placed solely on the basis of clinical requirements, as determined by the team of epileptologists and neurosurgeons. The brain network dataset was based on recordings made in the operating room prior to electrode removal, before and during induction of general anesthesia with propofol. Details of the experimental procedure, recording, and electrophysiological data analysis are available in \citep{banks2020cortical}.

\begin{table}[t]
\caption{}
\label{supptab:dataset}
\centering
\begin{tabular}{cccc}
        Subject & Age & Gender & Network size  \\
    \midrule
    R369 & 30 & M & 199 \\
    L372 & 34 & M & 174 \\
    R376 & 48 & F & 189 \\
    B384 & 38 & M & 89 \\
    R399 & 22 & F & 175 \\
    L400 & 59 & F & 126 \\
    L403 & 56 & F & 194 \\
    L405 & 19 & M & 127 \\
    L409 & 31 & F & 160 \\
    L423 & 51 & M & 152 \\
    L514 & 46 & M & 118 \\
    \bottomrule
\end{tabular}
\end{table}

\section{Toy Illustration of Topological Clustering}

Supplementary Figure \ref{suppfig:toy_illustration} illustrates a cluster analysis of the proposed topological approach using only topological centroids ($\lambda = 1$) on a toy dataset generated using random modular networks described in Section \ref{sec:validation} in the main body of the paper.

\begin{figure}
\includegraphics[width=1\linewidth]{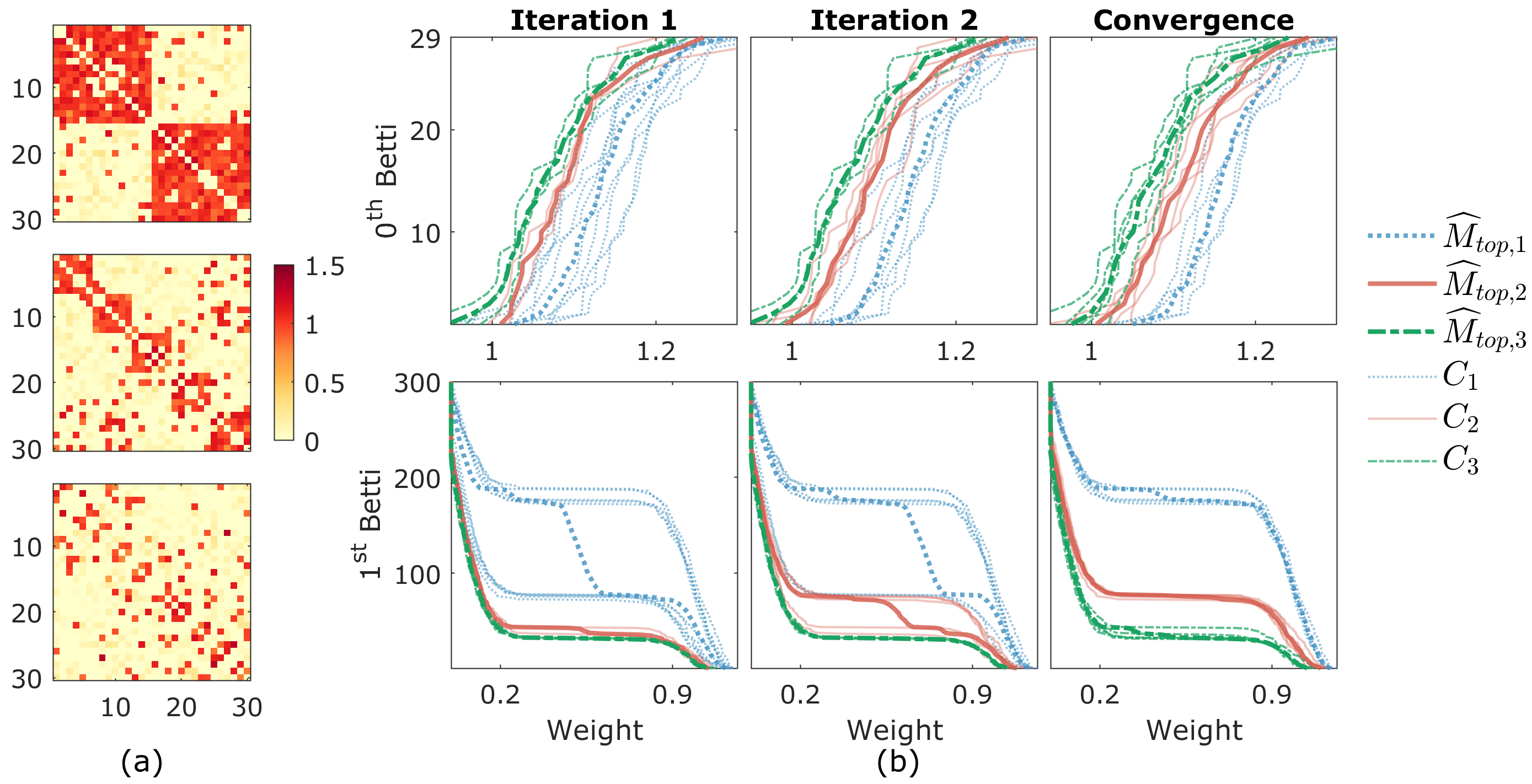}
\centering
\caption{Toy illustration of performing topological clustering using only topological centroids ($\lambda =1$). Clustering is performed on 15 networks: five each with $m=2, 5$ and 10 modules. All networks use $|V|=30$, within module connection probability $r=0.9$ and edge weight standard deviation $\sigma=0.1$. (a) Network examples for $m=2$ (top), $5$ (middle) and $10$ (bottom) modules. (b) Illustration of topological clustering. Topological centroids $\widehat M_{top,i}$ (thick lines) and individual network topologies are visualized using Betti numbers as a function of edge weights (filtration values). The algorithm converges in three iterations and the final partition (last column) perfectly matches the ground truth, i.e., $C_1,C_2$ and $C_3$ corresponds to $m=2,5$ and 10.}
\label{suppfig:toy_illustration}
\end{figure}

\end{document}